 
\documentclass{article}

\usepackage{microtype}
\usepackage{graphicx}

\usepackage{subfigure}

\usepackage{booktabs, multirow} 
\usepackage{soul}
\usepackage[table]{xcolor} 
\usepackage{changepage,threeparttable} 
\usepackage{tcolorbox}

\usepackage{hyperref}


\usepackage[accepted]{icml2024}


\usepackage{amsmath}
\usepackage{amssymb}
\usepackage{mathtools}
\usepackage{amsthm}
\usepackage{listings}
\usepackage{breqn}
\usepackage{enumitem}
\usepackage{epigraph}
\usepackage[capitalize,noabbrev]{cleveref}
\usepackage{dblfloatfix}

\DeclareFontFamily{U}{mathb}{}
\DeclareFontShape{U}{mathb}{m}{n}{
  <-5.5> mathb5
  <5.5-6.5> mathb6
  <6.5-7.5> mathb7
  <7.5-8.5> mathb8
  <8.5-9.5> mathb9
  <9.5-11.5> mathb10
  <11.5-> mathb12
}{}
\DeclareSymbolFont{mathb}{U}{mathb}{m}{n}
\DeclareMathSymbol{\ulsh}{3}{mathb}{"E8}
\DeclareMathSymbol{\ursh}{3}{mathb}{"E9}
\DeclareMathSymbol{\dlsh}{3}{mathb}{"EA}
\DeclareMathSymbol{\drsh}{3}{mathb}{"EB}

\usepackage{arydshln}
\usepackage{thmtools}
\newcommand{\qs}{QuIP$\#$}
\newcommand{\qss}{\qs\ }
\newcommand{\sota}{state-of-the-art\ }

\newcommand{\norm}[1]{\left\|#1\right\|}
\newcommand{\Abs}[1]{\left|#1\right|}
\newcommand{\Exv}[1]{\mathbf{E}\left[#1\right]}
\newcommand{\Prob}[1]{\operatorname{Prob}\left(#1\right)}

\newcommand{\trace}[1]{\operatorname{tr}\left(#1\right)}
\newcommand{\R}{\mathbb{R}}

\theoremstyle{plain}
\newtheorem{theorem}{Theorem}[section]

\newtheorem{lemma}[theorem]{Lemma}

\theoremstyle{definition}
\newtheorem{definition}[theorem]{Definition}

\theoremstyle{remark}

\usepackage[textsize=tiny]{todonotes}

\icmltitlerunning{\qs}

\begin{document}

\twocolumn[
\icmltitle{\qs: Even Better LLM Quantization with \\ Hadamard Incoherence and Lattice Codebooks}



\icmlsetsymbol{equal}{*}

\begin{icmlauthorlist}
\icmlauthor{Albert Tseng}{equal,cornellcs}
\icmlauthor{Jerry Chee}{equal,cornellcs}
\icmlauthor{Qingyao Sun}{cornellorie}
\icmlauthor{Volodymyr Kuleshov}{cornellcs}
\icmlauthor{Christopher De Sa}{cornellcs}
\end{icmlauthorlist}

\icmlaffiliation{cornellcs}{Department of Computer Science, Cornell University}
\icmlaffiliation{cornellorie}{Department of Operations Research and Information Engineering, Cornell University}

\icmlcorrespondingauthor{Albert Tseng}{albert@cs.cornell.edu}
\icmlcorrespondingauthor{Jerry Chee}{jerrychee@cs.cornell.edu}

\icmlkeywords{Machine Learning, ICML, Quantization, Large Language Models, LLMs, Low Precision, Inference, Systems, Hardware, 2 bit, QuIP, Incoherence Processing, Lattice Codebooks, Vector Quantization}

\vskip 0.3in
]



\printAffiliationsAndNotice{\icmlEqualContribution}
\linepenalty=1000
\begin{abstract}
Post-training quantization (PTQ) reduces the memory footprint of LLMs by quantizing their weights to low-precision.
In this work, we introduce \qs, a weight-only PTQ method that achieves state-of-the-art results in extreme compression regimes ($\le$ 4 bits per weight) using three novel techniques.
First, \qss improves QuIP's \cite{chee2023quip} incoherence processing by using the randomized Hadamard transform, which is faster and has better theoretical properties.
Second, \qss uses vector quantization to take advantage of the ball-shaped sub-Gaussian distribution that incoherent weights possess: specifically, we introduce a set of hardware-efficient codebooks based on the highly symmetric $E_8$ lattice, which achieves the optimal 8-dimension unit ball packing.
Third, \qss uses fine-tuning to improve fidelity to the original model.
Our experiments show that \qss outperforms existing PTQ methods, enables new behaviors in PTQ scaling, and supports fast inference.
Our code can be found at \url{https://github.com/Cornell-RelaxML/quip-sharp}.
\end{abstract}

\section{Introduction}

\begin{figure}
\centering
\includegraphics[width=\linewidth]{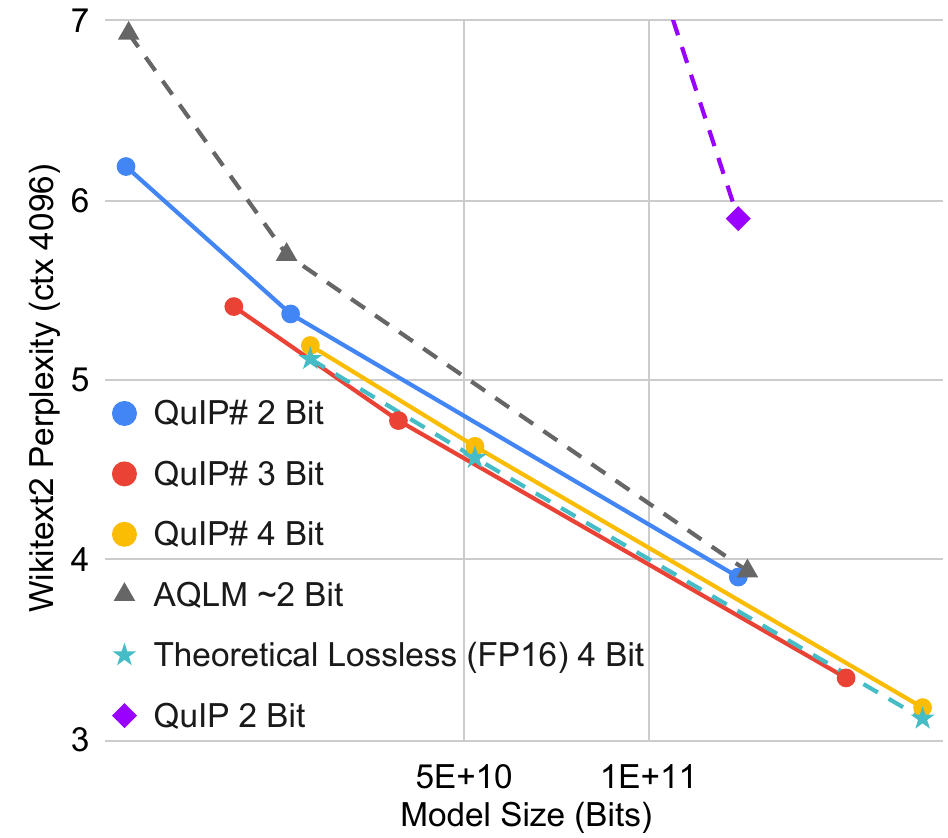}
\vspace{-0.2cm}
\caption{\qss offers unprecedented quantization quality at extreme compression ratios. \qss 3-bit models also scale better than theoretically lossless 4-bit models, a previously unseen result.}
\vspace{-0.1cm}
\label{fig:llama2scale}
\end{figure}

\begin{figure*}
\centering
\includegraphics[width=0.9\linewidth]{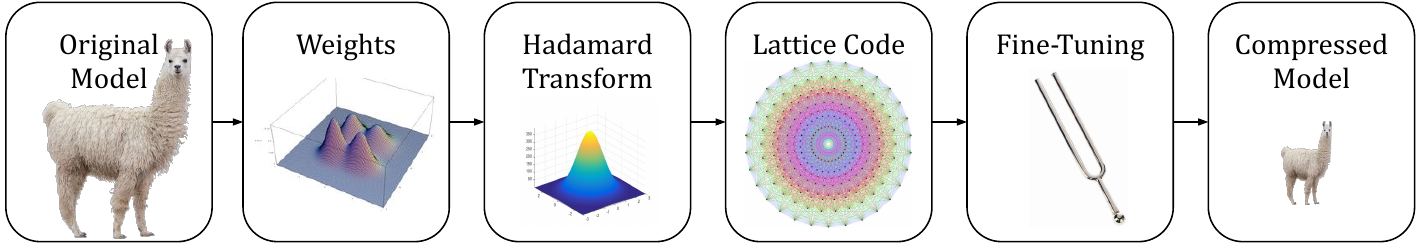}
\caption{\qss performs incoherence processing with a Randomized Hadamard Transform and uses lattice codebooks to achieve \sota quantized models.}
\end{figure*}


Large language models (LLMs) have driven rapid advances across diverse fields such as natural language processing \cite{llama2}, scientific modeling \cite{hyenadna}, and program synthesis \cite{codellama}.
However, the massive size of these models poses significant challenges to their deployment. For example, the largest model in the Llama 2 family has 70B parameters, and requires 140GB of GPU memory in native 16-bit precision \cite{llama2}. 
This massive memory footprint motivates research into lossless LLM compression methods.

Post-training quantization (PTQ) linearly reduces the memory footprint of models by storing trained weights with less precision.
For example, Llama 2 70B only requires $<20$GB of memory when quantized to 2 bits. 
This not only lets large models fit on smaller devices, but also enables faster throughput in memory bound settings such as autoregressive decoding.
However, existing quantization methods either do not scale to extreme compression ratios \cite{omniquant} or have expensive decoding schemes \cite{aqlm}, motivating the development of both good \textit{and} fast PTQ methods.

In this work, we introduce \qs, a \textit{weight-only} PTQ method that achieves a new \sota in model quantization.
\qs~improves over existing work via three techniques: incoherence processing, lattice codebooks, and fine-tuning.
Incoherence processing is a principled form of outlier suppression that produces approximately Gaussian distributed weight matrices \cite{chee2023quip}.
\qss performs incoherence processing with the computationally-efficient randomized Hadamard transform \cite{halko2011finding} (Section \ref{sec:RHT}).
To quantize incoherent matrices, \qss uses the BlockLDLQ block adaptive rounding algorithm with compressible codebooks based on the $E_8$ lattice, which achieves the highest density 8 dimensional unit-ball packing \cite{maryna} (Section \ref{sec:blockround}). 
The $E_8$ lattice is highly structured and symmetric, allowing our codebooks to be hardware-friendly and admit fast inference.
Finally, \qss includes an inter-layer fine-tuning algorithm that further improves quantization quality (Section \ref{sec:ftorig}).

\qss significantly outperforms existing PTQ methods including OmniQuant \cite{omniquant}, QuIP \cite{chee2023quip} (a previous, separate work), and AQLM \cite{aqlm}.
To the best of our knowledge, \qss is also the first PTQ method where 3-bit models scale better than 4-bit models.
This directly refutes \citet{dettmers2022scaling}'s claim that 4-bit models are ``optimal'' and indicates that as the field of PTQ develops, 2-bit models are likely to scale better than 3-bit models in the near future.
Moreover, \qss was designed from the ground up to be fast.
Algorithm \ref{alg:qsinf} describes fast inference with a \qs-quantized linear layer.
Our ``proof of concept'' CUDA implementation of \qss achieves over 50\% of peak memory bandwidth on a NVIDIA RTX 4090, validating our design choices.

\textbf{In summary}, we introduce \qs, a post-training quantization method that achieves \sota results by
\begin{enumerate}[nosep]
\item Performing incoherence processing with the Randomized Hadamard Transform, which has better incoherence properties and faster runtime than the Kronecker factorization in QuIP.
\item Rounding incoherence-processed weight matrices with block adaptive rounding and codebooks based on the $E_8$ lattice, which achieves the highest 8-dimension unit ball packing density (kissing number).
\item Introducing an inter-layer fine-tuning algorithm that further improves quantization quality.
\end{enumerate}

\begin{algorithm}
\caption{\qss without Fine-Tuning (\qs-NoFT)}
\begin{algorithmic}
\INPUT Weight $W \in \mathbb{R}^{m \times n}$, hessians $H\in\mathbb{R}^{n\times n}$, $g$-dim. $k$-bit codebook $C$
\STATE $\hat W, \hat H, S_U, S_V \gets \mbox{IP-RHT}(W, H)$ (Alg. \ref{alg:RHT})
\STATE $\hat W \gets \mbox{BlockLDLQ}(\hat W, \hat H, C)$ (Sec. \ref{sec:blockldlq})
\OUTPUT $\hat W, S_U, S_V$
\end{algorithmic}
\label{alg:qsnoft}
\end{algorithm}

\begin{algorithm}
\caption{\qss Inference (for a Linear Layer)}
\begin{algorithmic}
\INPUT $\hat W$, $S_U, S_V$ from Alg. \ref{alg:qsnoft}, $g$-dim. $k$-bit codebook $C$, input $x \in \mathbb{R}^n$.
\STATE $y \gets \texttt{Had}(S_V \odot x)$ where $\texttt{Had}$ performs \\ \hspace{2em} an orthogonal Hadamard transform (Sec. \ref{sec:RHT})
\STATE $y \gets \texttt{decompress_multiply}(\hat W, C, y)$
\STATE $y \gets \texttt{Had}(S_U \odot y)$
\OUTPUT $y$ 
\end{algorithmic}
\label{alg:qsinf}
\end{algorithm}

\section{Background / Related Work}

\subsection{Compressing LLMs}

A large body of work has focused on compressing LLMs, as doing so can directly benefit LLM inference at scale.
Methods such as pruning, quantization aware training (QAT), and post-training quantization (PTQ) all focus on different areas of this problem and are not strictly orthogonal to each other.
Pruning \textit{removes} weights from models while preserving model quality and inference performance \cite{jerrypruning, wanda}.
QAT focuses on training models that are more ``quantizable'' but usually requires training models from scratch \cite{nagelqat,xi20234bit}. 
PTQ, which \qss falls under, instead quantizes \textit{pre-trained} models.
PTQ requires less compute than QAT and achieves competitive performance \cite{chee2023quip,optq,omniquant,aqlm}.
For the rest of this paper, we focus on PTQ.

\subsection{Quantization and Adaptive Rounding}
\label{sec:quantbg}
In \qs, we follow existing \sota PTQ methods and round weights to minimize the per-layer proxy loss, as formalized by \citet{nagelround}:
\begin{align}
\label{eq:round_proxy}
\ell(\hat W) &= E_x \left[ \| (\hat W - W)x \|^2 \right] \\
  &= \operatorname{tr}\left((\hat W - W) H (\hat W - W)^T\right) .
\label{eqn:proxy}
\end{align}
Here, $W \in \mathbb{R}^{m \times n}$ is the original weight matrix in a linear layer, $\hat W \in \mathbb{R}^{m \times n}$ are the quantized weights, $x \in \mathbb{R}^n$ is an input vector drawn uniformly at random from a calibration set, and $H = E_x[xx^T]$ is a proxy Hessian.
This intra-layer formulation makes quantization tractable for LLMs.
One way to minimize $\ell$ is to use adaptive rounding methods that iteratively round weight matrices by considering the current rounding error for that specific matrix.
For example, the LDLQ\footnote{OPTQ \cite{optq} and QuIP independently introduced alternative formulations of this rounding method, and QuIP showed them to be equivalent. LDLQ is the name given by QuIP.} rounding algorithm iteratively rounds rows of model weights using linear feedback from quantization error of already rounded rows.
LDLQ is optimal within the class of adaptive rounding methods with linear feedback and offers provably better error rates than nearest or stochastic rounding \cite{chee2023quip}.



\subsection{Incoherence Processing}
\label{sec:incpbg}

Multiple works have observed that outliers in model activations and weights can hinder quantization quality, motivating methods that ``suppress'' outliers during quantization.
For example, AWQ \cite{awq} scales model weights by information from activations and OmniQuant \cite{omniquant} uses simple learnable model-preserving transformations.
However, these heuristic-based approaches tend to fail at lower bitrates.

Instead, in QuIP, \citet{chee2023quip} proposed that \emph{incoherence} is important for LLM quantization.
Informally, incoherent matrices have concentrated entry magnitudes---ruling out outliers.
In LLMs, incoherent weight and Hessian matrices mean that both the thing being rounded (weights) and important rounding directions (Hessians) are not too large in any coordinate.
This enables quantization with \emph{provably} bounded error.
\begin{definition}[\citet{chee2023quip}]
A Hessian $H \in \mathbb{R}^{n \times n}$ is $\mu$-incoherent if its eigendecomposition $H = Q \Lambda Q^T$ has \[\textstyle \max_{i,j} \; |Q_{ij}| = \max_{i,j} \; |e_i^T Q e_j| \leq \mu / \sqrt{n}.\] 
A weight matrix $W \in \mathbb{R}^{m \times n}$ is $\mu$-incoherent if \[\max_{i,j} \; \textstyle |W_{ij}| = \max_{i,j} \; |e_i^T W e_j| \leq \mu \|W\|_F / \sqrt{mn}.\]
\end{definition}
To exploit incoherence, \citet{chee2023quip} introduced \emph{incoherence processing} as a part of their quantization method QuIP.
QuIP's incoherence processing works by conjugating $W$ and $H$ by structured random orthogonal matrices.
Specifically, QuIP constructs orthogonal matrices $U \in \mathbb{R}^{m \times m}$ and $V \in \mathbb{R}^{n \times n}$ via a Kronecker product by drawing uniform random orthogonal matrices $U_1$, $U_2$ (of sizes about $\sqrt{n}$), $V_1$, and $V_2$ (of sizes about $\sqrt{m}$) and setting $U = U_1 \otimes U_2$ and $V = V_1 \otimes V_2$.
If we assign $\tilde H \gets V H V^T$ and $\tilde W \gets U W V^T$, $\tilde H$ and $\tilde W$ become $\tilde{O}(1)$-incoherent with high probability (see their Lemma~5).
Note that this transformation preserves the proxy objective, as $\trace{(U W V^T) (V H V^T) (V W^T U^T)} = \trace{WHW^T}$.
After quantizing the transformed weight matrix $\tilde W$ using $\tilde H$, during inference, QuIP-quantized models transform model activations $x$ with $V$ and $U^T$ to compute \[
\textstyle U^T (\operatorname{quantized}(\tilde W) (V x)) \approx U^T (\tilde W (V x)) = Wx. \]
These structured orthogonal multiplies by a Kronecker product lead to a runtime overhead of $\Theta(n \sqrt{n} + m \sqrt{m})$, which is small relative to the $\Theta(mn)$ cost of the multiply by $W$.

Incoherence processing can be seen as a principled alternative to more complicated and heuristic methods for outlier suppression.
Methods such as grouping and keeping outliers in FP16 require extra storage and can negatively impact performance.
For example, using a 16 bit scale per group of 64 weights requires an extra 0.25 bits per weight.
This increase is significant in extreme compression regimes, whereas incoherence processing has minimal inference overhead and allows more bits to be spent on actually quantizing model weights.
Alternatively, keeping outliers in high precision requires storing unstructured high precision matrices, which are slow to multiply by.

\subsection{Vector Quantization}
\label{sec:vq}
Prior PTQ works have focused on quantizing each scalar weight $W_{ij}$ individually, amounting to scalar quantization (SQ) \cite{chee2023quip, awq, omniquant}. 
However, SQ is subotimal as it ignores the shape of the source distribution.
Vector quantization (VQ) instead quantizes a group of $d$ weights together as a $d$ dimensional vector.
In $k$-bit VQ, a vector is quantized to one of $2^{kd}$ vectors $\in \mathbb{R}^d$ that form a $2^{kd} \times d$ codebook $C$. 
By shaping $C$ to the source distribution of $W$, VQ can achieve lower distortion than SQ, with higher $d$ enabling better shaping \cite{kostina}.

However, VQ has exponential cost in both the bitrate and vector dimension. 
As such, VQ can be expensive and can have limited distortion gains over SQ due to practical constraints on $d$.
For example, for fast inference on GPUs, $C$ must fit in L1 cache even after bank conflicts (32$\times$ duplication).
This means that $kd$ can be at most $\approx10$ for an unstructured $C$.
In \qs, we mitigate these issues by using a highly structured 2-bit codebook based on the 8D $E_8$ lattice, E8P.
E8P achieves $kd=16$ but can be compressed $256\times$, allowing it to fit in GPU cache.


\subsection{Fine-Tuning vs. Quantization Aware Training}
\label{sec:FTvsQAT}

Fine-tuning (FT) for LLM PTQ was introduced in AQLM \cite{aqlm} as a tractable way to capture inter-layer interactions.
As presented in AQLM and here, fine-tuning is essentially a hybrid method between pure PTQ and full QAT that requires significantly less data and compute than full QAT.
With \qs, fine-tuning generally matches the performance of QAT, with the caveat that QAT for LLMs is a relatively underexplored area. 
For example, with some extrapolation, LLM-QAT \cite{liu2023llmqat} 4 bit (4-16-16) performs around the same as \qss 4 bit with or without FT. 
However, \qss can quantize a 70B parameter model in a few hours on a single 8 GPU node while LLM-QAT needs 960 GPU-hours to generate training data alone.
Since fine-tuning for PTQ is a very recent development, both the methods presented here and in AQLM are almost certainly not optimal.
However, they serve to show that FT is a relatively cheap way to achieve QAT-quality models, making such an approach practical and promising. 
\section{Incoherence Processing with the Randomized Hadamard Transform}
\label{sec:RHT}

\begin{algorithm}[t]
\caption{Incoherence Processing with RHT (IP-RHT)}
\begin{algorithmic}
\INPUT $W \in \mathbb{R}^{m \times n}, H \in \mathbb{R}^{n \times n}$
\STATE Sample sign vectors $S_V \sim \mathcal{U}\{\pm1\}^n, S_U \sim \mathcal{U}\{\pm1\}^m$
\STATE $\hat W \gets \texttt{Had}(diag(S_U)\texttt{Had}(diag(S_V) W^T)^T)$ where 
\\ \hspace{2.5em} $\texttt{Had}$ is the Hadamard transform (sec. \ref{sec:RHT})
\STATE $\hat H \gets \texttt{Had}(diag(S_V) \texttt{Had}(diag(S_V) H)^T)$
\OUTPUT $\hat W, \hat H, S_U, S_V$
\end{algorithmic}
\label{alg:RHT}
\end{algorithm}

In this section, we propose a way of improving the incoherence processing of QuIP by replacing the 2-factor Kronecker product by a Randomized Hadamard Transformation (RHT)~\cite{halko2011finding}.
This change yields three advantages: (1) the theoretical bound on the incoherence parameter $\mu$ is improved; (2) the asymptotic cost of multiplying by the structured random orthogonal matrix is improved from $\Theta(n \sqrt{n})$ to $\Theta(n \log n)$; (3) the cost to multiply is further reduced by a constant factor, since a Hadamard matrix multiply can be performed without any floating-point multiplies as its entries are in $\{-1,+1\}$.
Additionally, we show in Section~\ref{sec:ablations} that this change by itself improves the perplexity of quantized LLMs.

Recall from section \ref{sec:incpbg} that one way to efficiently perform incoherence processing is to conjugate $W$ and $H$ by structured random orthogonal matrices.
\qss uses the RHT, which performs $x \to VSx$ where $V \in \mathbb{R}^{n \times n}$ is a Hadamard matrix, $S$ is a random sign vector $\{\pm 1\}^n$, and $x \in \mathbb{R}^n$.
The RHT can be computed in $O(n \log n)$ time with the Fast Walsh-Hadamard Transform \cite{fwht} when $n$ is a power of $2$.
We will temporarily assume that all dimensions are powers of 2.
Later in the section we will explain 2 methods for incoherence processing when the dimension is not a power of 2.

\begin{restatable}{lemma}{lemmahadincoh}
\label{lemma:had_incoh}
Let $H$ be any positive semidefinite matrix on $\mathbb{R}^{n \times n}$ and $W$ any weight matrix on $\mathbb{R}^{m \times n}$.
Let $U \in \mathbb{R}^{m \times m}$ and $V \in \mathbb{R}^{n \times n}$ be orthogonal scaled Hadamard matrices. Let $S_U \in \mathbb{R}^{m \times m}$ and $S_V \in \mathbb{R}^{n \times n}$ be random diagonal matrices with independent diagonal elements drawn uniformly from $\{-1,+1\}$.
Then for any $\delta > 0$, $V S_V H S_V V^T$ is $\mu_H$-incoherent with probability at least $1 - \delta$, and $U S_U W S_V V^T$ is $\mu_W$-incoherent with probability at least $1 - \delta$, where
\[
\mu_H = \sqrt{2 \log\left( \frac{2 n^2}{\delta} \right)}
\;\;\text{  and  }\;\;
\mu_W = 2 \log\left( \frac{4 mn}{\delta} \right).
\]
\end{restatable}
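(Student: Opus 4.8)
The plan is to reduce both statements to Hoeffding's inequality for a weighted Rademacher sum, $\Pr(|\sum_k \epsilon_k a_k| > t) \le 2\exp(-t^2/(2\sum_k a_k^2))$, combined with two structural facts: orthogonal conjugation preserves eigendecompositions, and orthogonal maps preserve the Frobenius norm. Throughout, write $\epsilon$ for the diagonal of $S_U$ and $\eta$ for the diagonal of $S_V$; these are independent Rademacher vectors, and the entries of the scaled Hadamard matrices satisfy $|U_{ik}| = 1/\sqrt m$, $|V_{ik}| = 1/\sqrt n$.

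For the Hessian, let $H = Q\Lambda Q^T$ be an eigendecomposition. Since $V S_V$ is orthogonal, $V S_V H S_V V^T = \tilde Q \Lambda \tilde Q^T$ with $\tilde Q = V S_V Q$ orthogonal, so this is an eigendecomposition of the transformed matrix and it suffices to bound $\max_{i,j}|\tilde Q_{ij}|$. For fixed $(i,j)$, $\tilde Q_{ij} = \sum_k V_{ik}\eta_k Q_{kj}$ is a Rademacher sum whose coefficient vector has squared norm $\sum_k V_{ik}^2 Q_{kj}^2 = n^{-1}\sum_k Q_{kj}^2 = 1/n$, because the columns of $Q$ are unit vectors. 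Hoeffding gives $\Pr(|\tilde Q_{ij}| > \mu_H/\sqrt n) \le 2\exp(-\mu_H^2/2)$, and a union bound over the $n^2$ index pairs together with $\mu_H = \sqrt{2\log(2n^2/\delta)}$ drives the total failure probability below $\delta$.

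For the weight matrix, orthogonality gives $\|U S_U W S_V V^T\|_F = \|W\|_F$, so it suffices to show $\max_{i,j}|Z_{ij}| \le \mu_W \|W\|_F/\sqrt{mn}$ for $Z = U S_U W S_V V^T$, where $Z_{ij} = \sum_{k,l}\epsilon_k \eta_l\, U_{ik} W_{kl} V_{jl}$ is a bilinear form in the two independent sign vectors. I would control it in two stages. First condition on $\eta$: applying Hoeffding to each inner sum $\sum_l \eta_l W_{kl} V_{jl}$ (whose coefficient vector has squared norm $\|W_{k\cdot}\|^2/n$ for the $k$-th row $W_{k\cdot}$ of $W$) and union bounding over all $mn$ pairs $(k,j)$, with probability at least $1-\delta/2$ every such sum is at most $\|W_{k\cdot}\|\sqrt{2\log(4mn/\delta)/n}$ in magnitude. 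On this event, for each fixed $(i,j)$ we have $Z_{ij} = \sum_k \epsilon_k a_k$ with $\sum_k a_k^2 = m^{-1}\sum_k (\sum_l \eta_l W_{kl} V_{jl})^2 \le 2\log(4mn/\delta)\,\|W\|_F^2/(mn)$; Hoeffding over $\epsilon$ then gives $\Pr(|Z_{ij}| > \mu_W \|W\|_F/\sqrt{mn} \mid \eta) \le 2\exp(-\mu_W^2/(4\log(4mn/\delta)))$. Choosing $\mu_W = 2\log(4mn/\delta)$ makes this at most $\delta/(4mn)$, so a union bound over the $mn$ pairs $(i,j)$ keeps the second-stage failure below $\delta/2$, and the two stages together fail with probability at most $\delta$.

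The two Hoeffding applications and the constant chasing are routine; the only genuine subtlety is the weight case, where $Z_{ij}$ is a second-order Rademacher chaos rather than a plain sum. The two-stage conditioning works without invoking a Hanson--Wright-type inequality precisely because $S_U$ and $S_V$ are independent, so conditioning on one leaves a true Rademacher sum in the other; the price is a $\sqrt{\log}$ factor at each stage, which multiply to give the $\log$ (rather than $\sqrt{\log}$) dependence in $\mu_W$. The one bookkeeping point that must not be dropped is that the first-stage event has to be taken uniformly over all pairs $(k,j)$, not merely over $k$, since the quantity $\sum_k a_k^2$ feeding the second stage depends on the column index $j$.
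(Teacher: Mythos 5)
Your proof is correct and follows essentially the same route as the paper's. The paper proves the underlying concentration inequality (its Lemma on block orthohadamard matrices) in a more general form so that it also covers the $2\times 2$-block case needed for the randomized FFT, but specialized to diagonal $\pm 1$ matrices it reduces exactly to the Hoeffding bound you invoke; the paper's proof of the weight-matrix bound is the same two-stage ``condition on one sign vector, control the row norms, then apply the bound to the other sign vector'' argument, just stated more tersely (``applying this once on each side to the rows and columns respectively, and union bounding over the $mn$ entries''). Your remark that the first-stage event must be taken uniformly over the pairs $(k,j)$ rather than over $k$ alone is exactly the point the paper leaves implicit. One small slip in the arithmetic: with $\mu_W = 2\log(4mn/\delta)$ the per-pair conditional failure probability is $2\exp\bigl(-\mu_W^2/(4\log(4mn/\delta))\bigr) = \delta/(2mn)$, not $\delta/(4mn)$; the union bound over $mn$ pairs still gives $\delta/2$ for the second stage as you state, so the conclusion is unaffected.
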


In QuIP \cite{chee2023quip}, the 2-factor Kronecker approach achieves $\mu_W^{Kron} = A^2 \log\left( 4 C m n / \delta \right)^2$, where $A$ and $C$ are global constants independent of $n$ and the number of factors.
\qs's RHT achieves superior incoherence via a log dependence on the matrix size rather that the Kronecker method's log-squared dependence.
All of QuIP's theory analyzing the proxy loss in Eq.~\eqref{eq:round_proxy} still holds with the RHT, with the improved incoherence rates propagating through.

Now, what about dimensions $n$ that are not powers of 2?
In most cases, we can factorize $n = pq$ where that $p$ is the largest power of 2 such that there exists a known Hadamard matrix of size $q$.
This allows us to construct $V \in \mathbb{R}^{n \times n} = H_p \otimes H_q$ where $H_p$ and $H_q$ are size $p$ and $q$ Hadamard matrices, respectively.
Then we can compute $VSx$ in $O(q^2 p \log p)$ time, which is faster than the 
$O(n(p+q))$ time of QuIP's 2-factor Kronecker approach when $p \gg q$.
For example, Llama 2 70B has intermediate dimension $28672 = 1024 * 28$; $1024 \gg 28$.
Algorithm \ref{alg:RHT} describes how to perform incoherence processing with the RHT.
Doing so requires storing two sign vectors $S_U \in \{\pm 1\}^m$ and $S_V \in \{\pm 1\}^n$.
Since $n, m \gg 1000$ for LLMs, $S_U$ and $S_V$ add less than 0.01 bits per weight (see Section \ref{sec:accounting} for more details).

While the Hadamard conjecture states that $\exists H_k \forall k, 4 \mid k$, finding such Hadamard matrices is still an open problem \cite{hadconj}.
In cases when there does not exist a factorization $n=pq$ where $\exists H_p, H_q$, we present a Randomized Fast Fourier Transform (RFFT) incoherence processing algorithm with similar runtime and concentration properties as the RHT.
At a high level, the RFFT performs incoherence processing with the Fast Fourier Transform (FFT) \cite{fft} and a random complex phase.
The RFFT only requires $n$ to be even, which is much weaker than the RHT's restrictions on $n$.
The RFFT is also useful when there \textit{does exist} a decomposition $n=pq$ but $p \not \gg q$, resulting in reduced speedups over an $\Theta(n\sqrt{n})$ algorithm. 
The FFT itself is also well supported on a wide variety of hardware, meaning that it may be easier to implement a fast RFFT when adapting \qss to new hardware.
In practice, we find that the RFFT performs slightly worse than the RHT but still achieves strong results (Table \ref{tab:ablate_incoherence}).
We describe the RFFT in detail in Section \ref{sec:fft} in the Appendix.

\begin{table}[ht]
\centering
\caption{
RHT vs. RFFT incoherence processing using 2 Bit \qss (no FT).
WikiText2 perplexity ($\downarrow$), context length 4096.}
\label{tab:ablate_incoherence}
\small\sc
\begin{tabular}{@{}cccc@{}}
\toprule
Incoherence & 2-7B & 2-13B & 2-70B \\ \midrule
Hadamard & 8.22 & 6.06 & 4.16 \\
Fourier & 8.30 & 6.08 & 4.17 \\ \bottomrule
\end{tabular}
\end{table}

\section{BlockLDLQ and Lattice Codebooks}
\label{sec:blockround}
It follows from the central limit theorem that RHT-transformed weights follow a roughly ball-shaped Gaussian distribution.
However, rounding weights one at a time, as QuIP does with its LDLQ, ignores this shaping---producing a set of representable weight vectors that is shaped like a hypercube rather than a ball.
Vector quantization (VQ) lets us shape codebooks to better match the source distribution.
VQ codebooks quantize multiple weights to a single codebook entry, and we design the overall shape of our codebook to better match the roughly ball shape of the RHT transformed weights.
In Section \ref{sec:blockldlq}, we introduce BlockLDLQ, which adaptively rounds blocks of weights with VQ.
Within BlockLDLQ's VQ step, \qss uses the 2 bit E8P codebook (Section \ref{sec:e8pconstruct}).
E8P is based on the $E_8$ lattice, which achieves the highest density unit ball packing in $\mathbb{R}^8$ \cite{maryna}.
E8P achieves good shaping while enabling fast inference by only needing to look up from a $256\times 8$ codebook. 

\subsection{Adaptive Rounding for Vector Quantization}
\label{sec:blockldlq}

\citet{chee2023quip} formulated a class of adaptive rounding algorithms with linear feedback.
These methods round columns one at a time with linear feedback $a_k$ from the already rounded columns.
Specifically, columns of a weight matrix $W \in \mathbb{R}^{m \times n}$ are iteratively rounded for $k = 1, 2, \dots, n$:
$\hat W_k = \mathcal{Q}(W_k + (W_{:(k-1)} - \hat W_{:(k-1)})a_k),$
where $W_k$ is the $k$-th column of $W$, $W_{:(k-1)}$ is the first $k-1$ columns of $W$, $\mathcal{Q}$ performs nearest or stochastic rounding, and $a_k \in \mathbb{R}^{k-1}$.
The resulting $\hat W$ satisfies $\hat W = \mathcal{Q}(W + (W - \hat W) U)$, where $U \in \mathbb{R}^{n \times n}$ is a upper triangular matrix whose columns are $a_k$ and $\mathcal{Q}$ acts elementwise.

The LDLQ algorithm sets U to be $L^T - I$ where $H = L^TDL$ is the LDL decomposition of the proxy Hessian $H$.
From QuIP, we know that LDLQ is optimal within adaptive rounding methods with linear feedback when rounding to the integers.
However, LDLQ does not work with vector quantization, which rounds multiple columns together.
Here, we extend LDLQ to support vector quantization. Given a block size $g$ that evenly divides $n$, our block LDLQ is based on a novel $g$-block LDL decomposition $H = \mathbf{L}^T \mathbf{D} \mathbf{L}$, where $\mathbf{L}$ is a unit block lower triangular matrix (among the $n^2/g^2$ $g \times g$ blocks of $L \in \mathbf{R}^{n \times n}$, the $n/g$ diagonal blocks are all $I$ and all blocks above the diagonal are $0$), and $\mathbf{D}$ is a block diagonal matrix.\footnote{It is straightforward to produce the $g$-block LDL decomposition from the Cholesky decomposition of $H$.}
As before, we set $\mathbf{U} = \mathbf{L}^T - I$, and round $W$ in a block-wise fashion via
\[\hat W_k = \mathbf{Q}(W_k + (W_{:(k-1)} - \hat W_{:(k-1)})\mathbf{A}_k),\]
where $\mathbf{A}_k \in \mathbb{R}^{n \times g}$ contains the $k - g + 1$ through $k$-th columns of $\mathbf{U}$ (the $k$th block), $W_k$ similarly denotes the $k$th block of $W$, and $\mathbf{Q}$ denotes a vector quantizer.
As in the original QuIP paper, we can bound the error of this method.
\begin{restatable}{theorem}{thmLDLQ}
\label{theorem:LDLQ}
Suppose that we round $W \in \mathbb{R}^{m \times n}$ using $g$-block LDLQ with Hessian $H$, producing $\hat W$. Suppose that $H$ is $\mu$-incoherent, and that we use a (possibly stochastic) vector quantizer $\mathbf{Q}$ that satisfies $\mathbf{E}[(\mathbf{Q}(x) - x)(\mathbf{Q}(x) - x)^T] \preceq \sigma^2 I$ for any $x \in \mathbb{R}^g$. Then
\[
\textstyle
\mathbf{E}[\operatorname{tr}((\hat W - W)H(\hat W - W)^T)]
\le
\frac{g m \mu^2 \sigma^2}{n} \operatorname{tr}(H^{1/2})^2.
\]
\end{restatable}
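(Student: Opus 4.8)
The plan is to follow the structure of QuIP's analysis of scalar LDLQ, in three stages: an exact algebraic reduction of the proxy loss to a trace against the block-diagonal matrix $\mathbf{D}$; an expectation step that replaces $\mathbf{D}$ by $\sigma^2 \operatorname{tr}(\mathbf{D})$ up to the spectator factor $m$; and an incoherence bound $\operatorname{tr}(\mathbf{D}) \le \tfrac{g \mu^2}{n}\operatorname{tr}(H^{1/2})^2$.

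\emph{Stage 1 (algebra).} Let $\eta \in \mathbb{R}^{m \times n}$ collect the per-block quantizer errors, i.e. its $k$-th column block is $\eta_k = \hat W_k - \big(W_k + (W_{:(k-1)} - \hat W_{:(k-1)})\mathbf{A}_k\big)$. Since $\mathbf{U} = \mathbf{L}^T - I$ is strictly block upper triangular, stacking the defining recursion over all blocks gives $\hat W = W + (W - \hat W)\mathbf{U} + \eta$, hence $(\hat W - W)(I + \mathbf{U}) = \eta$, i.e. $(\hat W - W)\mathbf{L}^T = \eta$ and $\hat W - W = \eta \mathbf{L}^{-T}$ (note $\mathbf{L}$ is invertible, being unit block triangular). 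Substituting into the proxy and using $H = \mathbf{L}^T \mathbf{D} \mathbf{L}$, so that $\mathbf{L}^{-T} H \mathbf{L}^{-1} = \mathbf{D}$, yields the exact identity $\operatorname{tr}\big((\hat W - W) H (\hat W - W)^T\big) = \operatorname{tr}(\eta\, \mathbf{D}\, \eta^T)$.

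\emph{Stage 2 (expectation).} The matrix $\mathbf{D}$ is deterministic (it depends only on $H$) and block diagonal with $g \times g$ blocks $\mathbf{D}_1, \dots, \mathbf{D}_{n/g}$, each PSD since $\mathbf{D} = \mathbf{L}^{-T} H \mathbf{L}^{-1} \succeq 0$. Writing $\eta_{i,j} \in \mathbb{R}^g$ for the error vector of row $i$ inside the $j$-th block, one has $\operatorname{tr}(\eta\, \mathbf{D}\, \eta^T) = \sum_j \sum_{i=1}^m \operatorname{tr}\big(\mathbf{D}_j\, \eta_{i,j} \eta_{i,j}^T\big)$. The only subtlety is adaptivity: the quantizer input on block $j$ depends on the random roundings of blocks $1, \dots, j-1$. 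This is handled by iterated expectation over the natural filtration: conditioned on that history the quantizer input is fixed, so the hypothesis gives $\mathbf{E}[\eta_{i,j} \eta_{i,j}^T \mid \cdot] \preceq \sigma^2 I_g$, and because $\mathbf{D}_j \succeq 0$ the map $X \mapsto \operatorname{tr}(\mathbf{D}_j X)$ is monotone over PSD matrices, so that term contributes at most $\sigma^2 \operatorname{tr}(\mathbf{D}_j)$ in expectation. Summing over $i$ and $j$ gives $\mathbf{E}[\operatorname{tr}((\hat W - W) H (\hat W - W)^T)] \le m \sigma^2 \operatorname{tr}(\mathbf{D})$.

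\emph{Stage 3 (incoherence) --- the main obstacle.} It remains to prove $\operatorname{tr}(\mathbf{D}) \le \tfrac{g \mu^2}{n} \operatorname{tr}(H^{1/2})^2$, the block analogue of the bound QuIP establishes for the diagonal of the scalar LDL factor. Here $\mathbf{D}_j$ is the leading $g \times g$ block of the Schur complement of $H$ with respect to its first $(j-1)g$ coordinates; in the scalar decomposition those $g$ coordinates would instead be eliminated one at a time against progressively more conditioned (Loewner-smaller) Schur complements, so $\operatorname{tr}(\mathbf{D}) \ge \operatorname{tr}(D_{\mathrm{scalar}})$, and this block coarsening is exactly what produces the extra factor $g$. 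A purely entrywise comparison (bounding each diagonal entry of $\mathbf{D}_j$ by the corresponding entry of $H$) only gives $\operatorname{tr}(\mathbf{D}) \le \operatorname{tr}(H)$, which is too weak; to obtain the $\operatorname{tr}(H^{1/2})^2$ dependence one re-runs QuIP's global argument --- writing $\operatorname{tr}(\mathbf{D})$ via the Gram--Schmidt lengths of the ``square-root'' vectors $\Lambda^{1/2} Q^T e_k$ (where $H = Q\Lambda Q^T$) and invoking the entrywise bound $|Q_{ij}| \le \mu/\sqrt{n}$ of $\mu$-incoherence --- but stopping the orthogonalization at block boundaries rather than at every coordinate, so that the at most $g$ unorthogonalized coordinates per block contribute the factor $g$ over the scalar estimate $\operatorname{tr}(D_{\mathrm{scalar}}) \le \tfrac{\mu^2}{n} \operatorname{tr}(H^{1/2})^2$. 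Combining this with Stage 2 proves the theorem. (In \qss the required $\mu$-incoherence of $H$ is supplied in practice by the RHT preprocessing via Lemma~\ref{lemma:had_incoh}; the theorem itself only uses it as a hypothesis.)
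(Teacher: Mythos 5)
Your Stages~1 and~2 are correct and match the paper's proof.  Stage~2 is in fact a bit more careful than what the paper writes: you correctly note that the input to $\mathbf{Q}$ on block $j$ depends on the random roundings of blocks $1,\dots,j-1$, so the blocks are \emph{not} independent, and you handle this by iterated expectation over the filtration plus monotonicity of $X \mapsto \trace{\mathbf{D}_j X}$ on PSD matrices; the paper only informally invokes an ``independent application of $\mathbf{Q}$'' but needs exactly this conditioning argument.

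The gap is Stage~3, which you yourself flag as the main obstacle and leave as a sketch, and the sketch is not how the paper closes it.  The paper does not compare $\trace{\mathbf{D}}$ to the scalar $\trace{D_{\mathrm{scalar}}}$, and the implicit claim your sketch leans on --- that block orthogonalization ``costs a factor $g$ over the scalar estimate,'' i.e.\ $\trace{\mathbf{D}} \le g\,\trace{D_{\mathrm{scalar}}}$ --- is nowhere established (each $\trace{\mathbf{D}_j}$ equals $\norm{(I-P_{<j})V_j}_F^2$, which drops the within-block orthogonalization of the scalar Cholesky, and bounding that excess by a clean factor $g$ is itself nontrivial and would need incoherence).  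What the paper actually proves is a direct block bound (its Lemma in Appendix~B): (i) characterize $\trace{\mathbf{D}} = \min_R \trace{R^T H R}$ over unit block-lower-triangular $R$, with minimizer $R = \mathbf{L}^{-1}$ by stationarity; (ii) plug in the specific feasible $R = (I + \alpha M \odot H^{1/2})^{-1}$, where $M$ is the strictly block-lower-triangular mask and $\alpha^{-1} = \norm{H^{1/2}\odot M_D}_2$ ($M_D$ the $g\times g$ block-diagonal mask), and show $(I + \alpha M\odot H^{1/2})^T(I+\alpha M\odot H^{1/2}) \succeq \alpha H^{1/2}$, hence $RR^T \preceq \alpha^{-1}H^{-1/2}$ and $\trace{\mathbf{D}} \le \alpha^{-1}\trace{H^{1/2}}$; (iii) bound $\alpha^{-1} = \norm{H^{1/2}\odot M_D}_2 \le \sum_i \lambda_i^{1/2}\norm{(u_iu_i^T)\odot M_D}_2 \le \frac{g\mu^2}{n}\trace{H^{1/2}}$, since each $g\times g$ diagonal block of $(u_iu_i^T)\odot M_D$ equals $ww^T$ for a length-$g$ subvector $w$ of $u_i$ with $\norm{w}^2 \le g\cdot(\mu/\sqrt{n})^2$.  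The factor $g$ in the theorem arises precisely from that last spectral-norm estimate on the masked square root --- a variational argument plus a direct incoherence computation --- not from a comparison to the $g=1$ decomposition.  To finish your proof you would need to supply an argument of this kind (or a genuinely new one) for Stage~3; as written it does not yet establish $\trace{\mathbf{D}} \le \frac{g\mu^2}{n}\trace{H^{1/2}}^2$.
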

Observe that under the same conditions, just quantizing all blocks independently would yield $\mathbf{E}[\operatorname{tr}((\hat W - W)H(\hat W - W)^T)]
\le
g m \sigma^2 \operatorname{tr}(H)$: this ``improvement'' from the trace of $H$ to the square of the trace of its square root divided by $n$ is the same factor achieved in the scalar case in QuIP.\footnote{The original QuIP paper also included multiple other technical guarantees, including a bound that considers more rigorously the ``real'' case of finite-sized codebooks. While these results could also be generalized to the block-LDLQ case, we view this as not providing much insight relevant to \qss{} beyond Theorem~\ref{theorem:LDLQ}, so (if desired) they are left as an exercise for the reader.}



\subsection{The E8P (``E8 Padded'') Codebook}



\label{sec:e8pconstruct}

BlockLDLQ relies on an internal vector quantization (VQ) step $\mathbf{Q}$ that rounds a $d$-dimension ($g$ in the previous section) vector to a codebook $C$.
To effectively apply VQ, $C$ should be shaped like the source distribution and have high packing density.
One way to improve shaping is by increasing $d$. 
However, recall from Section \ref{sec:vq} that to quantize a vector $v \in \mathbb{R}^{d}$ to $k$ bits with VQ, $C$ must have size $2^{kd} \times d$.
Since the codebook size is exponential in both the vector dimension and bitrate, VQ quickly becomes intractable at high dimensions or bitrates.

In \qs, we introduce the novel 2-bit 8 dimensional \emph{E8P codebook}, which contains $2^{16}$ entries but only requires lookups into a $2^8$-entry table, with the remaining $8$ bits being used to store signs and shifts.
E8P requires only 1KiB of space and therefore fits in the L1 cache of any modern GPU, even after duplicating for bank conflicts ($32 \times$).
E8P mitigates the scaling issues of VQ by taking advantage of the structure and symmetries of the $E_8$ lattice on which it is based.
The $E_8$ lattice is composed of all-integer or all-half-integer vectors in $\mathbb{R}^8$ whose sum is an even number, that is
\[ 
    \textstyle
    E_8 = \left(\mathbb{Z}^8 \cup \left(\mathbb{Z}^8+\frac{1}{2}\right) \right)
    \cap
    \left\{ x \mid \mathbf{1}^T x \text{ is even} \right\}.
\]
The construction of the E8P codebook starts with an equivalent way to write $E_8$ via the $\hat D_8$ lattice, where $\hat D_8 = \left\{ x \in \mathbb{Z}^8 + \frac{1}{2} \mid \mathbf{1}^T x \text{ is even} \right\}$ is the set of half-integer vectors with even parity: here, $E_8 = \hat D_8 \cup (\hat D_8 + \frac{1}{2})$. It follows that $(\hat D_8 - \frac{1}{4}) \cup (\hat D_8 + \frac{1}{4}) = E_8 + \frac{1}{4}$ is just a shifted copy of $E_8$ (keeping the same optimal packing density).

$\hat D_8$ has nice symmetry properties: flipping any (nonzero) even number of signs of an element in $\hat{D}_8$, yields another distinct element in $\hat{D}_8$. This means that if $|\hat D_8|$ denotes the set of elementwise absolute values of entries in $\hat D_8$, then each element of $\hat D_8$ can be expressed (uniquely) as the elementwise product of an entry $s \in |\hat D_8|$ and a sign vector of appropriate parity.
So, if we start from some ``source codebook'' of absolute entries $S \subset |\hat{D_8}|$, we can use the 128 possible odd- or even-parity sign flips to generate a subset of $\hat{D_8}$.
Each entry in $S$ is either an odd or even number of flips away from an entry in $\hat{D_8}$, but not both.
Thus, given $s \in S$ and 7 out of the 8 sign flips, we can infer the last one from the parity of the 7 sign flips and $s$.
This lets us use the following pattern to store a 16-bit codeword in $E_8 + \frac{1}{4}$: 8 bits for the entry in $S$, 7 bits for sign flips, and 1 bit to $\pm\frac{1}{4}$.
This lets us decode a size $2^{16}$ codebook by looking up into only a size $2^8$ codebook ($S$) and performing some operations.
All that remains is how to choose $S$:
we set $S$ to be the 227 elements of $|\hat{D_8}|$ with norm $\le \sqrt{10}$ plus 29 ``padding'' elements from $|\hat{D_8}|$ with norm $\sqrt{12}$ (see Section \ref{sec:e8ps}).
We call this ball-shaped $2^{16}$-entry lattice codebook ``E8P.''

\begin{figure}
\centering
\includegraphics[width=\linewidth]{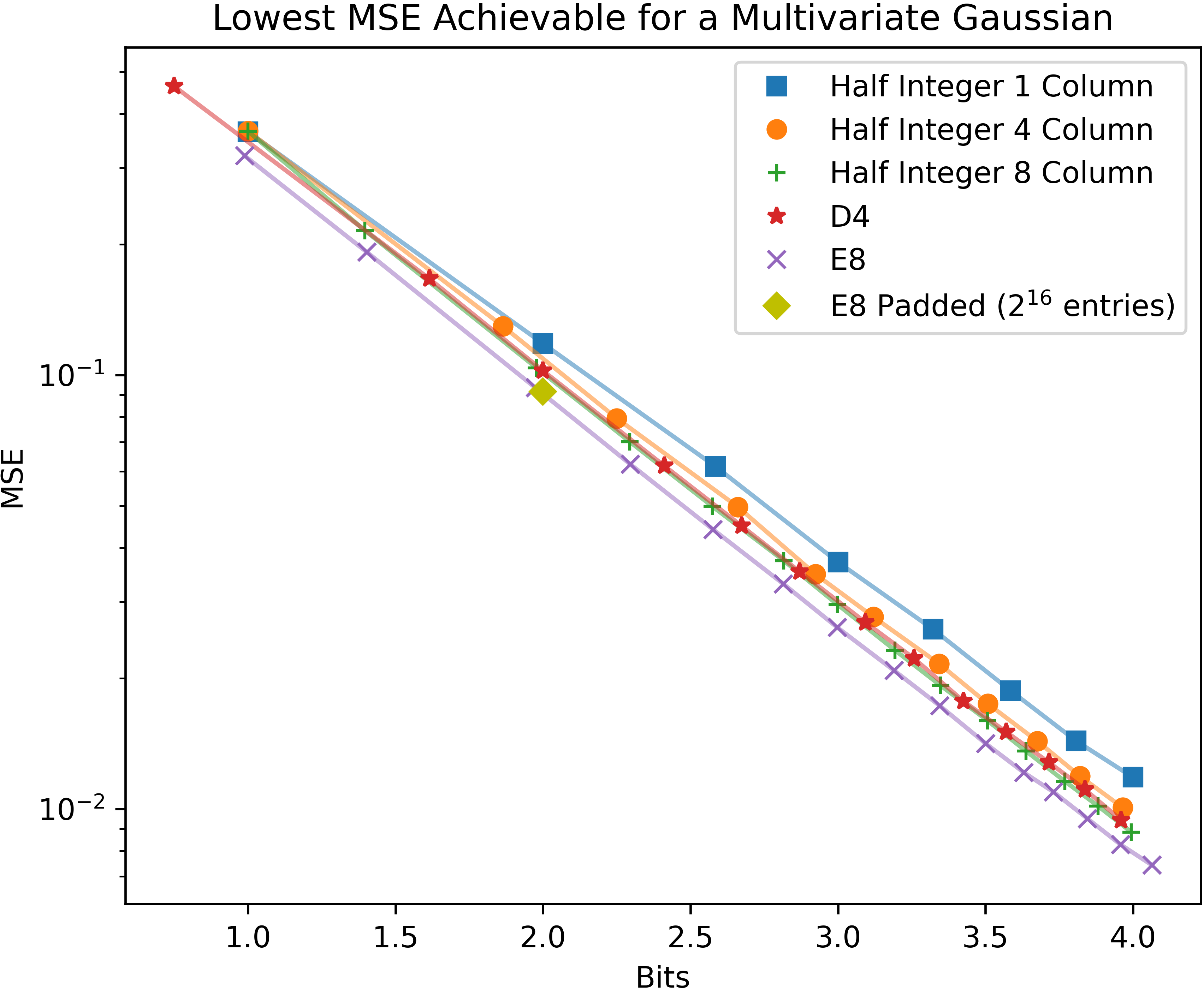}
\vspace{-0.25in}
\caption{Minimum achievable elementwise MSE of quantizing a Gaussian to various codebooks. $E_8$-based codebooks outperform other presented codebooks due to the underlying packing density and high dimensionality of $E_8$.
}
\label{fig:latticemse}
\vspace{-0.25in}
\end{figure}

Figure \ref{fig:latticemse} plots the elementwise MSE of quantizing a standard multivariate Gaussian to various $k$ bit codebooks.
Each $k$-bit codebook consists of a $d$-dimensional base lattice intersected with a ball to reach $2^{kd}$ points.
The $E_8$-based codebooks achieve lower MSEs than all other presented codebooks, including those based on the $D_4$ lattice (the even-parity vectors in $\mathbb{Z}^4$), which achieves the kissing number in $\mathbb{R}^4$. 
This figure illustrates the importance of dimension for vector quantization.
Increasing the vector dimension decreases the error for the half integer grid, as the resulting codebook is closer in shape to the source distribution.
Finally, while K-means on the source distribution would achieve lower MSE \cite{lloyd}, there are a number of practical reasons why a K-means based codebook would be less practical, including worse end-to-end empirical performance.
We discuss this more in Section \ref{sec:kmeans}.


\subsection{Scaling $E_8$ to Higher Bitrates}

The $E_8$ lattice works well for low bitrates (e.g. 2 bits), but quickly becomes intractable at higher bitrates due to codebook size.
In \qs, we use residual vector quantization (RVQ) \cite{rvq} to get the benefits of lattice codebooks at higher bitrates.
RVQ quantizes a vector $x$ to $p$ bits with a set $q$ of $q_i$-bit codebooks (denoted $\mbox{RVQ}(x, p, q)$ where $p = \sum_{0 \le i < |q|} q_i$) by repeatedly quantizing the quantization residual.
That is, $\mbox{RVQ}(x, p, q) = \sum_{0 \le i < |q|} \delta_i$ where $\delta_i = Q_{q_i}\left((x - \sum_{0 \le j < i}\delta_j) / s_i \right) \cdot s_i$, we let $Q_{q_i}(\cdot)$ denote quantizing to a $q_i$ bit codebook, and $s_i \in \mathbb{R}$.
Using RVQ, we can quantize to 4 bits by rounding with the 2 bit E8P codebook twice.
We can also quantize to 3 bits by using the 2 bit E8P codebook and a 1-bit $E_8$ codebook (elements of $E_8$ with norm $\le 2$ and 15 elements of $E_8$ with norm 4).
One could also use more advanced multi-codebook quantization approaches other than RVQ, but we found that RVQ was sufficient to achieve strong quantization performance.

\section{Fine-Tuning During Quantization}
\label{sec:ftorig}
Recent works have suggested that inter-layer interactions are important for lossless extreme quantization \cite{omniquant, aqlm}.
Here, we employ a simple fine-tuning algorithm that attempts to recover the original unquantized model during quantization. 
Our fine tuning method runs on a small development set and can be performed in around 50 GPU-hours for a 70B parameter model.

First, we fine-tune within each transformer block by fine-tuning unquantized layers to compensate for \textit{already-quantized} layers before quantization. 
This mitigates the activation error caused by an individual linear layer \textit{during quantization}, and can be parallelized across transformer blocks.
The idea of fine-tuning within a transformer block was previously proposed in \citet{aqlm}; our methodology differs in how we fine tune (before quantization) and the set of tunable parameters.
Second, after all linear layers in the model are quantized, the remaining unquantized parameters are fine-tuned to minimize activation error over the \textit{entire model}.
By optimizing the sign vectors as real vectors instead of binary vectors in both steps, we allow the incoherence processing step to shape the weight matrix \textit{to the codebook}.
While this means we must store the sign vectors in FP16 instead of as bitvectors, the size of LLM matrices means that the sign vectors still add less than 0.01 bits per weight.
We describe these steps in more detail in Section \ref{sec:ft}.

%
\section{Experiments}


\begin{table*}[ht]
\caption{Llama 1 \& 2 Wikitext2 and C4 perplexity ($\downarrow$), context length 2048.} 
\label{tab:ppl2048}
\centering
\small\sc
\tabcolsep=0.15cm
\begin{tabular}{@{}cccccccccccccccc@{}}
\multicolumn{1}{l}{} & \multicolumn{1}{l}{}                           & \multicolumn{7}{c}{Wikitext 2}                                                                                                                             & \multicolumn{7}{c}{C4}                                                                                        \\ \midrule
Method               & \multicolumn{1}{c|}{Bits}                      & 1-7           & 1-13          & 1-30          & 1-65          & 2-7           & 2-13          & \multicolumn{1}{c|}{2-70}                                  & 1-7           & 1-13          & 1-30          & 1-65          & 2-7           & 2-13          & 2-70          \\ \midrule
FP16                 & \multicolumn{1}{c|}{16}                        & 5.68          & 5.09          & 4.10          & 3.53          & 5.47          & 4.88          & \multicolumn{1}{c|}{3.32}                                  & 7.08          & 6.61          & 5.98          & 5.62          & 6.97          & 6.47          & 5.52          \\
\rowcolor[HTML]{D9D9D9} 
AWQ                  & \multicolumn{1}{c|}{\cellcolor[HTML]{D9D9D9}4} & 6.08          & 5.34          & 4.39          & 3.76          & 6.15          & 5.12          & \multicolumn{1}{c|}{\cellcolor[HTML]{D9D9D9}-}             & 7.52          & 6.86          & 6.17          & 5.77          & 7.68          & 6.74          & -             \\
\rowcolor[HTML]{D9D9D9} 
OmniQ                & \multicolumn{1}{c|}{\cellcolor[HTML]{D9D9D9}4} & 5.86          & 5.21          & 4.25          & 3.71          & 5.74          & 5.02          & \multicolumn{1}{c|}{\cellcolor[HTML]{D9D9D9}3.47}          & 7.34          & 6.76          & 6.11          & 5.73          & 7.35          & 6.65          & 5.65          \\
\rowcolor[HTML]{D9D9D9} 
QuIP\# \tiny{no FT \& no $E_8$}                & \multicolumn{1}{c|}{\cellcolor[HTML]{D9D9D9}4} & 5.83          & 5.20          & 4.23          & 3.63          & 5.66          & 5.00          & \multicolumn{1}{c|}{\cellcolor[HTML]{D9D9D9}3.42}          & 7.25          & 6.70          & 6.06          & 5.68          & 7.17          & 6.59          & 5.59          \\
\rowcolor[HTML]{D9D9D9} 
QuIP\#               & \multicolumn{1}{c|}{\cellcolor[HTML]{D9D9D9}4} & \textbf{5.76} & \textbf{5.17} & \textbf{4.18} & \textbf{3.60} & \textbf{5.56} & \textbf{4.95} & \multicolumn{1}{c|}{\cellcolor[HTML]{D9D9D9}\textbf{3.38}} & \textbf{7.18} & \textbf{6.67} & \textbf{6.03} & \textbf{5.66} & \textbf{7.07} & \textbf{6.54} & \textbf{5.56} \\
AWQ                  & \multicolumn{1}{c|}{3}                         & 11.9          & 7.45          & 10.0          & 5.21          & 24.0          & 10.5          & \multicolumn{1}{c|}{-}                                     & 13.3          & 9.13          & 12.7          & 7.11          & 23.9          & 13.1          & -             \\
OmniQ                & \multicolumn{1}{c|}{3}                         & 6.49          & 5.68          & 4.74          & 4.04          & 6.58          & 5.58          & \multicolumn{1}{c|}{3.92}                                  & 8.19          & 7.32          & 6.57          & 6.07          & 8.65          & 7.44          & 6.06          \\
QuIP\# \tiny{no FT \& no $E_8$}                & \multicolumn{1}{c|}{3}                         & 6.29          & 5.52          & 4.54          & 3.91          & 6.19          & 5.34          & \multicolumn{1}{c|}{3.71}                                  & 7.82          & 6.98          & 6.29          & 5.86          & 7.85          & 6.98          & 5.78          \\
QuIP\#               & \multicolumn{1}{c|}{3}                         & \textbf{5.98} & \textbf{5.31} & \textbf{4.36} & \textbf{3.78} & \textbf{5.79} & \textbf{5.10} & \multicolumn{1}{c|}{\textbf{3.56}}                         & \textbf{7.39} & \textbf{6.83} & \textbf{6.17} & \textbf{5.77} & \textbf{7.32} & \textbf{6.72} & \textbf{5.67} \\
\rowcolor[HTML]{D9D9D9} 
OmniQ                & \multicolumn{1}{c|}{\cellcolor[HTML]{D9D9D9}2} & 15.5          & 13.2          & 8.71          & 7.58          & 37.4          & 17.2          & \multicolumn{1}{c|}{\cellcolor[HTML]{D9D9D9}7.81}          & 24.9          & 18.3          & 13.9          & 10.8          & 90.6          & 26.8          & 12.3          \\
\rowcolor[HTML]{D9D9D9} 
QuIP\# \tiny{no FT \& no $E_8$}                & \multicolumn{1}{c|}{\cellcolor[HTML]{D9D9D9}2} & 9.95          & 7.18          & 5.80          & 5.02          & 12.3          & 7.60          & \multicolumn{1}{c|}{\cellcolor[HTML]{D9D9D9}4.87}          & 11.7          & 8.67          & 7.55          & 6.83          & 14.8          & 9.57          & 6.82          \\
\rowcolor[HTML]{D9D9D9} 
QuIP\#               & \multicolumn{1}{c|}{\cellcolor[HTML]{D9D9D9}2} & \textbf{6.86} & \textbf{5.97} & \textbf{5.02} & \textbf{4.36} & \textbf{6.66} & \textbf{5.74} & \multicolumn{1}{c|}{\cellcolor[HTML]{D9D9D9}\textbf{4.16}} & \textbf{8.36} & \textbf{7.48} & \textbf{6.71} & \textbf{6.19} & \textbf{8.35} & \textbf{7.45} & \textbf{6.12} \\ \bottomrule
\end{tabular}
\end{table*}
\begin{table*}[ht]
\caption{Zeroshot Accuracy (\texttt{acc} in LM Eval, not \texttt{acc_norm}), Llama 2.}
\label{tab:zeroshot}
\centering
\small\sc
\tabcolsep=0.10cm
\centering
\begin{tabular}{cccccccccccccccc}
\multicolumn{1}{l}{}                                & \multicolumn{5}{c}{2-70}                                                                                         & \multicolumn{5}{c}{2-13}                                                                                         & \multicolumn{5}{c}{2-7}                                             \\ \hline
\multicolumn{1}{c|}{Method}                         & Bits & ArcC          & ArcE          & PiQA          & \multicolumn{1}{c|}{Wino}                                  & Bits & ArcC          & ArcE          & PiQA          & \multicolumn{1}{c|}{Wino}                                  & Bits & ArcC          & ArcE          & PiQA          & Wino          \\ \hline
\multicolumn{1}{c|}{FP16}                           & 16   & 51.1          & 77.7          & 81.1          & \multicolumn{1}{c|}{77.0}                                  & 16   & 45.6          & 73.3          & 73.5          & \multicolumn{1}{c|}{69.6}                                  & 16   & 40.0          & 69.3          & 78.5          & 67.3          \\
\rowcolor[HTML]{D9D9D9} 
\multicolumn{1}{c|}{\cellcolor[HTML]{D9D9D9}OmniQ}  & 4    & 49.8          & 77.9          & 80.7          & \multicolumn{1}{c|}{\cellcolor[HTML]{D9D9D9}75.8}          & 4    & 43.1          & 70.2          & 78.4          & \multicolumn{1}{c|}{\cellcolor[HTML]{D9D9D9}67.8}          & 4    & 37.9          & 67.8          & 77.1          & 67.0          \\
\rowcolor[HTML]{D9D9D9} 
\multicolumn{1}{c|}{\cellcolor[HTML]{D9D9D9}QuIP}   & 4    & 47.0          & 74.3          & 80.3          & \multicolumn{1}{c|}{\cellcolor[HTML]{D9D9D9}76.0}          & 4    & 44.9          & 73.3          & \textbf{79.0} & \multicolumn{1}{c|}{\cellcolor[HTML]{D9D9D9}69.7}          & 4    & -             & -             & -             & -             \\
\rowcolor[HTML]{D9D9D9} 
\multicolumn{1}{c|}{\cellcolor[HTML]{D9D9D9}AQLM}   & 4.07 & 51.0          & 78.1          & 81.4          & \multicolumn{1}{c|}{\cellcolor[HTML]{D9D9D9}76.9}          & 3.94 & 43.9          & 72.2          & 78.6          & \multicolumn{1}{c|}{\cellcolor[HTML]{D9D9D9}70.4}          & 4.04 & 40.3          & 68.9          & 77.7          & 67.3          \\
\rowcolor[HTML]{D9D9D9} 
\multicolumn{1}{c|}{\cellcolor[HTML]{D9D9D9}QuIP\#} & 4    & 50.6          & \textbf{78.1} & \textbf{81.4} & \multicolumn{1}{c|}{\cellcolor[HTML]{D9D9D9}\textbf{77.1}} & 4    & \textbf{45.5} & \textbf{73.9} & 78.9          & \multicolumn{1}{c|}{\cellcolor[HTML]{D9D9D9}69.9}          & 4    & \textbf{40.5} & \textbf{69.1} & \textbf{78.4} & \textbf{67.6} \\
\multicolumn{1}{c|}{OmniQ}                          & 3    & 47.6          & 75.7          & 79.7          & \multicolumn{1}{c|}{73.5}                                  & 3    & 42.0          & 69.0          & 77.7          & \multicolumn{1}{c|}{65.9}                                  & 3    & 35.3          & 62.6          & 73.6          & 63.6          \\
\multicolumn{1}{c|}{QuIP}                           & 3    & 46.3          & 73.2          & 80.0          & \multicolumn{1}{c|}{74.6}                                  & 3    & 41.5          & 70.4          & 76.9          & \multicolumn{1}{c|}{\textbf{69.9}}                         & 3    & -             & -             & -             & -             \\
\multicolumn{1}{c|}{AQLM}                           & 3.01 & 50.0          & 77.6          & 81.3          & \multicolumn{1}{c|}{\textbf{77.2}}                         & 3.03 & 43.6          & \textbf{73.5} & 77.8          & \multicolumn{1}{c|}{67.6}                                  & 3.04 & 38.7          & 67.8          & 76.6          & \textbf{68.4} \\
\multicolumn{1}{c|}{QuIP\#}                         & 3    & \textbf{50.9} & \textbf{77.7} & \textbf{81.4} & \multicolumn{1}{c|}{76.4}                                  & 3    & \textbf{44.0} & 72.5          & \textbf{78.4} & \multicolumn{1}{c|}{69.1}                                  & 3    & \textbf{39.2} & \textbf{68.4} & \textbf{77.3} & 66.5          \\
\rowcolor[HTML]{D9D9D9} 
\multicolumn{1}{c|}{\cellcolor[HTML]{D9D9D9}OmniQ}  & 2    & 28.7          & 55.4          & 68.8          & \multicolumn{1}{c|}{\cellcolor[HTML]{D9D9D9}53.2}          & 2    & 23.0          & 44.4          & 62.6          & \multicolumn{1}{c|}{\cellcolor[HTML]{D9D9D9}52.6}          & 2    & 21.6          & 35.2          & 57.5          & 51.5          \\
\rowcolor[HTML]{D9D9D9} 
\multicolumn{1}{c|}{\cellcolor[HTML]{D9D9D9}QuIP}   & 2    & 34.0          & 62.2          & 74.8          & \multicolumn{1}{c|}{\cellcolor[HTML]{D9D9D9}67.5}          & 2    & 23.5          & 45.2          & 62.0          & \multicolumn{1}{c|}{\cellcolor[HTML]{D9D9D9}52.8}          & 2    & 19.4          & 26.0          & 54.6          & 51.8          \\
\rowcolor[HTML]{D9D9D9} 
\multicolumn{1}{c|}{\cellcolor[HTML]{D9D9D9}AQLM}   & 2.07 & 47.9          & 77.7          & 80.4          & \multicolumn{1}{c|}{\cellcolor[HTML]{D9D9D9}75.9}          & 1.97 & 38.5          & 67.0          & 75.1          & \multicolumn{1}{c|}{\cellcolor[HTML]{D9D9D9}\textbf{69.5}} & 2.02 & 33.6          & 62.8          & 73.5          & 64.6          \\
\rowcolor[HTML]{D9D9D9} 
\multicolumn{1}{c|}{\cellcolor[HTML]{D9D9D9}QuIP\#} & 2    & \textbf{48.7} & 77.3          & 80.3          & \multicolumn{1}{c|}{\cellcolor[HTML]{D9D9D9}\textbf{75.9}} & 2    & \textbf{39.5} & \textbf{69.3} & \textbf{77.3} & \multicolumn{1}{c|}{\cellcolor[HTML]{D9D9D9}67.7}          & 2    & \textbf{34.6} & \textbf{64.6} & \textbf{75.1} & \textbf{64.9} \\ \hline
\end{tabular}
\end{table*}
\begin{table}[ht]
\centering
\caption{Wikitext2 and C4 perplexity ($\downarrow$), context length 4096.}
\label{tab:ppl4096}
\small\sc
\tabcolsep=0.07cm
\renewcommand{\arraystretch}{1.05}
\begin{tabular}{@{}lccccccccc@{}}
                                                    & \multicolumn{1}{l}{} & 2-7           & \multicolumn{1}{l}{}                                       & \multicolumn{1}{l}{} & 2-13          & \multicolumn{1}{l}{}                                       & \multicolumn{1}{l}{} & 2-70          & \multicolumn{1}{l}{} \\ \midrule
\multicolumn{1}{c|}{Method}                         & Bits                 & W2            & \multicolumn{1}{c|}{C4}                                    & Bits                 & W2            & \multicolumn{1}{c|}{C4}                                    & Bits                 & W2            & C4                   \\ \midrule
\multicolumn{1}{l|}{FP16}                           & 16                   & 5.12          & \multicolumn{1}{c|}{6.63}                                  & 16                   & 4.57          & \multicolumn{1}{c|}{6.05}                                  & 16                   & 3.12          & 4.97                 \\
\rowcolor[HTML]{D9D9D9} 
\multicolumn{1}{l|}{\cellcolor[HTML]{D9D9D9}QuIP\#} & 4                    & \textbf{5.19} & \multicolumn{1}{c|}{\cellcolor[HTML]{D9D9D9}6.75}          & 4                    & 4.63          & \multicolumn{1}{c|}{\cellcolor[HTML]{D9D9D9}6.13}          & 4                    & 3.18          & 5.02                 \\
\rowcolor[HTML]{D9D9D9} 
\multicolumn{1}{l|}{\cellcolor[HTML]{D9D9D9}$\drsh$ no FT}  & 4                    & 5.22          & \multicolumn{1}{c|}{\cellcolor[HTML]{D9D9D9}6.79}          & 4                    & 4.65          & \multicolumn{1}{c|}{\cellcolor[HTML]{D9D9D9}6.15}          & 4                    & 3.18          & 5.02                 \\
\rowcolor[HTML]{D9D9D9} 
\multicolumn{1}{l|}{\cellcolor[HTML]{D9D9D9}\hspace{2pt} $\drsh$ no $E_8$} & 4                    & 5.29          & \multicolumn{1}{c|}{\cellcolor[HTML]{D9D9D9}6.86}          & 4                    & 4.68          & \multicolumn{1}{c|}{\cellcolor[HTML]{D9D9D9}6.20}          & 4                    & 3.22          & 5.05                 \\
\rowcolor[HTML]{D9D9D9} 
\multicolumn{1}{l|}{\cellcolor[HTML]{D9D9D9}QuIP}   & 4                    & -             & \multicolumn{1}{c|}{\cellcolor[HTML]{D9D9D9}-}             & 4                    & 4.76          & \multicolumn{1}{c|}{\cellcolor[HTML]{D9D9D9}6.29}          & 4                    & 3.58          & 5.38                 \\
\rowcolor[HTML]{D9D9D9} 
\multicolumn{1}{l|}{\cellcolor[HTML]{D9D9D9}AQLM}   & 4.04                 & 5.21          & \multicolumn{1}{c|}{\cellcolor[HTML]{D9D9D9}6.74}          & 3.94                 & 4.64          & \multicolumn{1}{c|}{\cellcolor[HTML]{D9D9D9}6.14}          & 4.07                 & 3.17          & 5.01                 \\
\rowcolor[HTML]{FFFFFF} 
\multicolumn{1}{l|}{\cellcolor[HTML]{FFFFFF}QuIP\#} & 3                    & \textbf{5.41} & \multicolumn{1}{c|}{\cellcolor[HTML]{FFFFFF}\textbf{7.04}} & 3                    & \textbf{4.78} & \multicolumn{1}{c|}{\cellcolor[HTML]{FFFFFF}\textbf{6.35}} & 3                    & \textbf{3.35} & \textbf{5.15}        \\
\rowcolor[HTML]{FFFFFF} 
\multicolumn{1}{l|}{\cellcolor[HTML]{FFFFFF}$\drsh$ no FT}  & 3                    & 5.60          & \multicolumn{1}{c|}{\cellcolor[HTML]{FFFFFF}7.34}          & 3                    & 4.90          & \multicolumn{1}{c|}{\cellcolor[HTML]{FFFFFF}6.50}          & 3                    & 3.41          & 5.20                 \\
\rowcolor[HTML]{FFFFFF} 
\multicolumn{1}{l|}{\cellcolor[HTML]{FFFFFF}\hspace{2pt} $\drsh$ no $E_8$} & 3                    & 5.77          & \multicolumn{1}{c|}{\cellcolor[HTML]{FFFFFF}7.61}          & 3                    & 4.99          & \multicolumn{1}{c|}{\cellcolor[HTML]{FFFFFF}6.65}          & 3                    & 3.48          & 5.28                 \\
\rowcolor[HTML]{FFFFFF} 
\multicolumn{1}{l|}{\cellcolor[HTML]{FFFFFF}QuIP}   & 3                    & -             & \multicolumn{1}{c|}{\cellcolor[HTML]{FFFFFF}-}             & 3                    & 5.12          & \multicolumn{1}{c|}{\cellcolor[HTML]{FFFFFF}6.79}          & 3                    & 3.87          & 5.67                 \\
\rowcolor[HTML]{FFFFFF} 
\multicolumn{1}{l|}{\cellcolor[HTML]{FFFFFF}AQLM}   & 3.04                 & 5.46          & \multicolumn{1}{c|}{\cellcolor[HTML]{FFFFFF}7.10}          & 3.03                 & 4.83          & \multicolumn{1}{c|}{\cellcolor[HTML]{FFFFFF}6.37}          & 3.01                 & 3.36          & 5.17                 \\
\rowcolor[HTML]{D9D9D9} 
\multicolumn{1}{l|}{\cellcolor[HTML]{D9D9D9}QuIP\#} & 2                    & \textbf{6.19} & \multicolumn{1}{c|}{\cellcolor[HTML]{D9D9D9}\textbf{8.16}} & 2                    & \textbf{5.35} & \multicolumn{1}{c|}{\cellcolor[HTML]{D9D9D9}\textbf{7.20}} & 2                    & \textbf{3.91} & \textbf{5.71}        \\
\rowcolor[HTML]{D9D9D9} 
\multicolumn{1}{l|}{\cellcolor[HTML]{D9D9D9}$\drsh$ no FT}  & 2                    & 8.22          & \multicolumn{1}{c|}{\cellcolor[HTML]{D9D9D9}11.0}          & 2                    & 6.06          & \multicolumn{1}{c|}{\cellcolor[HTML]{D9D9D9}8.07}          & 2                    & 4.16          & 6.01                 \\
\rowcolor[HTML]{D9D9D9} 
\multicolumn{1}{l|}{\cellcolor[HTML]{D9D9D9}\hspace{2pt} $\drsh$ no $E_8$} & 2                    & 11.2          & \multicolumn{1}{c|}{\cellcolor[HTML]{D9D9D9}14.5}          & 2                    & 7.04          & \multicolumn{1}{c|}{\cellcolor[HTML]{D9D9D9}9.37}          & 2                    & 4.58          & 6.51                 \\
\rowcolor[HTML]{D9D9D9} 
\multicolumn{1}{l|}{\cellcolor[HTML]{D9D9D9}QuIP}   & 2                    & -             & \multicolumn{1}{c|}{\cellcolor[HTML]{D9D9D9}-}             & 2                    & 13.5          & \multicolumn{1}{c|}{\cellcolor[HTML]{D9D9D9}16.2}          & 2                    & 5.90          & 8.17                 \\
\rowcolor[HTML]{D9D9D9} 
\multicolumn{1}{l|}{\cellcolor[HTML]{D9D9D9}AQLM}   & 2.02                 & 6.93          & \multicolumn{1}{c|}{\cellcolor[HTML]{D9D9D9}8.84}          & 1.97                 & 5.70          & \multicolumn{1}{c|}{\cellcolor[HTML]{D9D9D9}7.59}          & 2.07                 & 3.94          & 5.72                 \\ \bottomrule
\end{tabular}

\end{table}
\begin{table}[]
\centering
\caption{\qss generation throughput on a NVIDIA RTX 4090 using the \textit{FlashAttention} library's Llama implementation. \qss achieves $>50\%$ peak memory bandwidth (1TB/s) during generation and admits fast inference.}
\label{tab:genspeed}
\small\sc
\begin{tabular}{@{}ccccc@{}}
\toprule
Model & \begin{tabular}[c]{@{}c@{}}2 Bit\\ tok/s\end{tabular} & \begin{tabular}[c]{@{}c@{}}2 Bit \%\\ Mem BW\end{tabular} & \begin{tabular}[c]{@{}c@{}}4 Bit\\ tok/s\end{tabular} & \begin{tabular}[c]{@{}c@{}}4 Bit \%\\ Mem BW\end{tabular} \\ \midrule
2-7B                                                         & 170.50                                                & 29.60\%                                                   & 117.73                                                & 40.87\%                                                   \\
2-13B                                                        & 104.83                                                & 33.80\%                                                   & 71.09                                                 & 45.84\%                                                   \\
1-30B                                                        & 51.60                                                 & 38.39\%                                                   & 32.50                                                 & 48.36\%                                                   \\
2-70B                                                        & 32.74                                                 & 56.84\%                                                   & OOM                                                   & OOM                                                       \\ \bottomrule
\end{tabular}
\end{table}

\begin{table}[]
\centering
\caption{\qss vs AQLM and FP16 generation throughput on a NVIDIA RTX 4090 using the \textit{HuggingFace} library's Llama implementation. Unlike AQLM, whose codebook is too large to fit in L1 cache, \qss achieves significant speedups over FP16.}
\label{tab:genspeedaqlm}
\small\sc
\begin{tabular}{@{}ccc@{}}
\toprule
Method       & 2-7B       & 2-70B \\ \midrule
FP16         & 33.1 tok/s & OOM    \\
AQLM 2 Bit   & 20.6       & 8.27  \\
QuIP\# 2 Bit & 106.3      & 25.9  \\ \bottomrule
\end{tabular}
\end{table}

\begin{figure}
\includegraphics[width=0.9\linewidth]{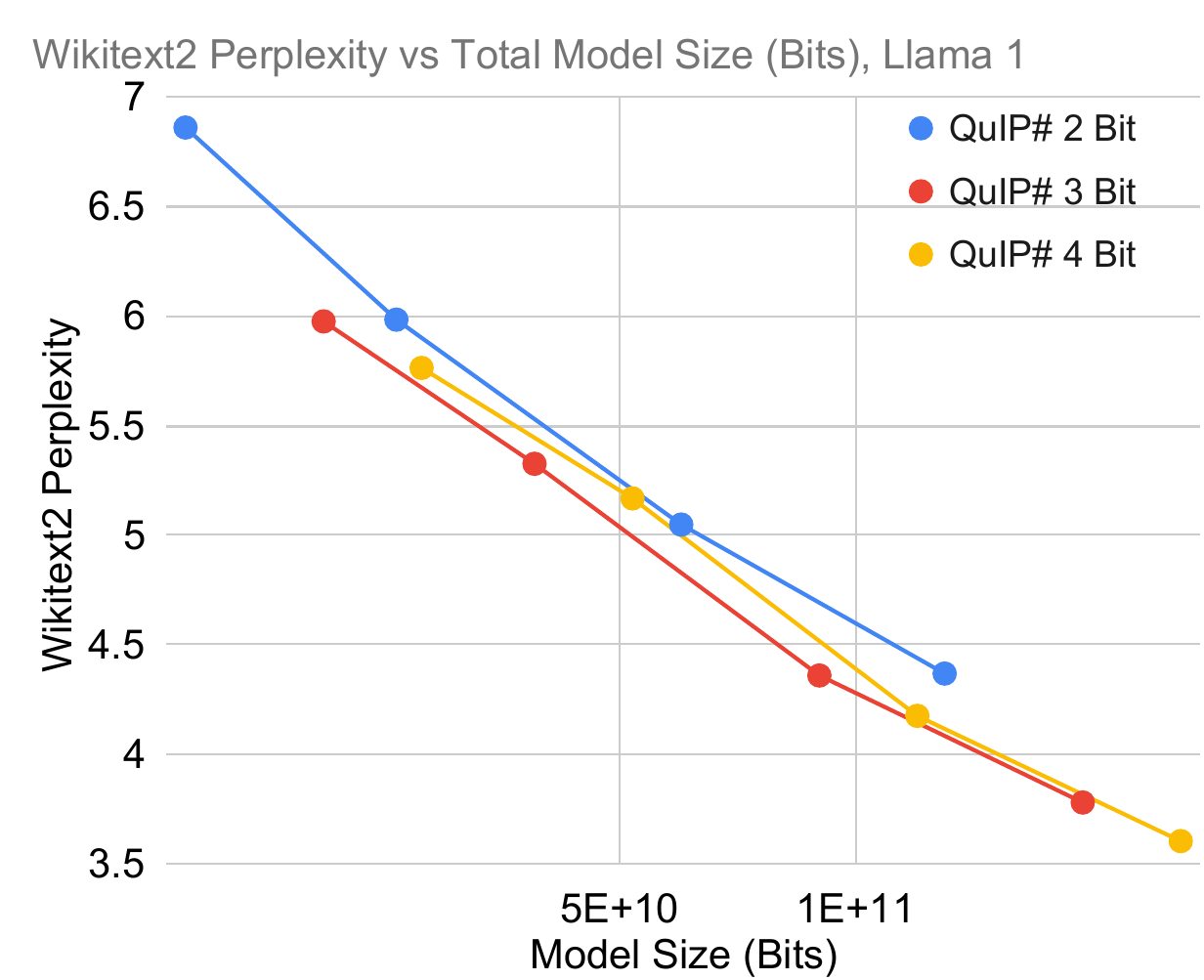}
\vspace{-0.25cm}
\caption{\qss scaling, Llama 1. Like Llama 2, \qss 3 bit scales better than \qss 4 bit for Llama 1 models and \qss 2 bit scales similarly to higher bitrates.}
\vspace{-0.5cm}
\label{fig:llama1scale}
\end{figure}

Our main experiments show the performance of \qss on the Llama 1 \cite{llama1} and 2 \cite{llama2} family of models.
These models range in size from 7 billion to 70 billion parameters and offer good performance, making them suitable for understanding how quantization methods perform and scale.
Additional results for other models are available in the Appendix.

In Section \ref{sec:qss_llama}, we compare \qss with recently published weight-only PTQ methods.
AWQ scales weights by activation magnitudes \textit{before} quantizing to reduce outliers \cite{awq}.
OmniQuant learns model-preserving layerwise transformations that reduce outliers per transformer block \cite{omniquant}.
AQLM uses vector quantization with learnable unstructured 8D codebooks \cite{aqlm}\footnote{We report results from the Jan 11, 2024 ArXiv version.}.
We report AQLM's ``$1 \times 16$'' numbers, which amounts to using a single codebook with $2^{16}$ entries $\in \mathbb{R}^8$ \textit{per linear layer}. 
These codebooks each take up 1MiB of space, making them too large to fit in the L1 cache of any current GPU and thus preventing fast inference (see Table \ref{tab:genspeedaqlm}).
Finally, we include QuIP \cite{chee2023quip} as a baseline for the improvements in \qs.

We report W$x$A16 numbers for AWQ and OmniQuant from the OmniQuant paper and AQLM numbers from AQLM.
We note that there are currently 2 methods for evaluating perplexity: using the Llama 1 context length of 2048 or using the model's native context length (e.g. 4096 for Llama 2).
OmniQuant and AWQ use 2048 for Llama 2 while AQLM uses 4096; we report both sets of numbers.
We also note that AQLM paper reports \qss numbers from an outdated version of \qs; the numbers here represent the latest \qss numbers.
Finally, we \textbf{bold} numbers in our tables when they are clearly better, such as a smaller model matching or outperforming a larger model or a similar sized model significantly outperforming another model.

\subsection{\qss on Llama Models}
\label{sec:qss_llama}


Table \ref{tab:ppl2048} shows a comparison of \qss with OmniQuant, AWQ, and \qss without fine tuning and E8P, with context length 2048.
\qss offers a paradigm shift in quantization quality over OmniQuant and AWQ.
Notably, while AWQ falls apart at even 2.15 bits \cite{omniquant} and OmniQuant produces unusable models at 2 bits, \qss produces high quality models that are close to OmniQuant 3 bit models.
Table \ref{tab:ppl2048} also shows the importance of incoherence processing.
\qss without fine-tuning or lattice codebooks significantly outperforms OmniQuant and AWQ, which both rely on heuristics to reduce model outliers during quantization. 

Table \ref{tab:ppl4096} shows a comparison of \qss with AQLM with context length 4096.
At 2 and 3 bits, \qss either significantly outperforms similar-sized AQLM models or achieves similar performance with a smaller model\footnote{In our experience, at extreme quantization levels, even 0.1 bits can make a significant difference in quantization quality.}.
At 4 bits, both methods perform similarly.
This is not surprising as \sota 4 bit models are all very close to FP16 performance.
Furthermore, the \qss 3 and 4 bit results presented in this paper use residual vector quantization; one could potentially achieve better numbers with more advanced multi-codebook quantization approaches.

Table \ref{tab:zeroshot} shows zeroshot results for \qs, AQLM, and OmniQuant.
Both AQLM and \qss signficantly outperform OmniQuant, which correlates with the perpelxity results.
AQLM and \qss both perform very close to FP16 at higher bitrates and for larger models, but \qss tends to outperform AQLM at lower bitrates and model sizes.
We note that zeroshot tasks have an element of randomness and even FP16 numbers can disagree by up to $0.5\%$.

\subsection{\qss Bit Scaling}


Figures \ref{fig:llama2scale} (first page) and \ref{fig:llama1scale} show how \qss scales on the Llama family of models and Wikitext2.
On both Llama 1 and 2, \qss 3 bit outperforms \qss 4 bit and \qss 2 bit offers similar scaling to 3 and 4 bit models.
Furthermore, on Llama 2, \qss 3 bit outperforms a theoretical lossless 4 bit model (FP16 at 4 bits).
To the best of our knowledge, this is the first time a 3 bit PTQ method has outperformed a theoretical lossless 4 bit model and also the first time a 2 bit PTQ method has offered similar scaling to higher bitrates.

\subsection{Efficient Inference with \qs}

One of the key benefits of PTQ is to increase the maximum possible inference throughput on a given device.
Since small-batch autoregressive decoding is usually memory bound, a smaller model requires less data to be read and can therefore be served faster.
However, achieving an actual speedup requires a quantization method with low decoding overhead, or inference will be bottlenecked by decoding.
For example, the AQLM models in the experiment tables use a different $2^{16} \times 8$ codebook for every linear layer. 
Each entry in these codebooks takes 2 bytes, meaning that each codebook is 1MiB. 
During inference, weights are read from these codebook in an essentialy random access pattern, meaning that the entire codebook must fit in L1 cache to enable fast inference (even L2 cache is too slow).
However, 1MiB is larger than any current GPU's L1 cache (the H100 has 256KB), so AQLM inference suffers from high cache miss rates and is actually \textit{slower than FP16} on modern GPUs (Table \ref{tab:genspeedaqlm}).

In contrast, \qss was designed around fast inference. 
The RHT can be computed in essentially $O(n \log n)$ time and E8P only requires 1KiB and can be decoded from with very few ($<5$) instructions per weight.
Table \ref{tab:genspeed} shows \qs's generation speed as measured with the FlashAttention library's \cite{dao2022flashattention, dao2023flashattention2} implementation of Llama.
\qss is able to achieve over 50\% of peak memory bandwidth with a 2 bit model even with minimal kernel fusion in the RHT, validating our design choices.
We note that since these ``fast inference design choices'' essentially amount to restrictions on what can be done during quantization, it should be entirely possible to achieve even better quantization quality at the expense of inference speed.

Finally, if we look at the speed-quality tradeoffs of different quantization methods, we also find that \qss enables new frontiers of PTQ performance. 
Compared to QuIP's published throughput numbers (measured on an A6000), \qss on an A6000 achieves roughly twice the inference throughput at the same bitrate, making \qss strictly better. 
Compared to existing ``fast inference'' quantization methods such as SpQR \cite{dettmers2023spqr} and SqueezeLLM \cite{kim2023squeezellm}, we again find that \qss offers significantly higher throughput ($>40\%$) at the same or better quantization quality.

\subsection{Ablations}

\label{sec:ablations}

Table \ref{tab:ppl4096} also contains an ablation on the various components of \qs. 
The ``no FT'' row shows \qss without fine-tuning and the ``no $E_8$'' row shows \qss without fine-tuning and lattice codebooks.
For the latter, we round to the 1-dimensional half-integer grid.
We also include QuIP numbers as reported by AQLM.
At all bitrates, each component of \qss brings additional performance gains.
The difference between QuIP and \qss without fine-tuning and lattice codebooks also shows the difference between QuIP's Kronecker factorization and \qs's RHT.
The RHT offers stronger incoherence properties than the Kronecker factorization (Section \ref{sec:RHT}), which improves performance.

\section{Conclusion}
We present \qs, a weight-only post training compression method that achieves state-of-the-art results on LLMs at 2, 3, and 4 bits per weight.
\qss uses the Randomized Hadamard Transform as an efficient and principled form of outlier suppression, and introduces the $E_8$ lattice-based E8P codebook to better quantize RHT transformed weights.
The E8P codebook is highly symmetric and admits fast inference, allowing a ``proof of concept'' \qss CUDA implementation to achieve over 50\% peak memory bandwidth on modern GPUs.
\qss also implements inter-layer fine tuning, further improving quantization.
To the best of our knowledge, \qss is the first PTQ method to achieve superior scaling at 3 bits over 4 bits and similar scaling at 2 bits to higher bitrates.
Our results indicate that, in the near future, 2 bit models are likely to scale better than 3 bit ones.

\section*{Impact Statement}
This paper presents work whose goal is to advance the field of Machine Learning. There are many potential societal consequences of our work, none which we feel must be specifically highlighted here.

\section*{Acknowledgements}

We thank, in no particular order, David Hou for helping with the \qss CUDA implementation, Tiancheng Yuan for lending his RTX 4090 and helping with acquiring \qss timing numbers, Tri Dao for a fast CUDA implementation of the Hadamard transform and general help with \qs, and Together AI for compute resources.



\bibliography{main}
\bibliographystyle{icml2024}

\newpage
\appendix
\onecolumn

\section{Concentration Inequalities for the Randomized Hadamard Transform and Fast Fourier Transform}

\subsection{Incoherence Processing with the Randomized Hadamard Transform}

\begin{lemma}
For any non-negative real number $n$,
\[
    \frac{1}{B\left( n, 1/2 \right)} 
    \int_{-1}^{+1} (1 - x^2)^{n-1} \cdot \exp(tx) \; dx
    \le
    \exp\left( \frac{t^2}{4n+2} \right).
\]
\label{lemmaIntegral}
\end{lemma}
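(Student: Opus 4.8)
The plan is to recognize the left-hand side as the moment generating function of the random variable $X$ supported on $[-1,1]$ with density proportional to $(1-x^2)^{n-1}$: the substitution $u=x^2$ shows $\int_{-1}^{1}(1-x^2)^{n-1}\,dx = B(n,1/2)$, so the normalizing constant is exactly the $1/B(n,1/2)$ in the statement. The claim is then the sub-Gaussian MGF bound $\mathbf{E}[e^{tX}]\le \exp\!\big(t^2/(4n+2)\big)$, which I would establish by an explicit power-series computation of the even moments of $X$. (Throughout I assume $n>0$ so that $B(n,1/2)$ is finite; $n=0$ is a degenerate edge case, handled if needed by a limiting argument.)

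First I would expand $e^{tx}=\sum_{k\ge 0} t^k x^k/k!$ and integrate term by term, which is legitimate since the integrand is dominated by $(1-x^2)^{n-1}e^{|t|}$ on the compact interval $[-1,1]$. Since the density is even, all odd moments vanish and $\mathbf{E}[e^{tX}] = \sum_{k\ge 0}\frac{t^{2k}}{(2k)!}\mathbf{E}[X^{2k}]$. Applying the substitution $u=x^2$ once more gives $\mathbf{E}[X^{2k}] = B(k+\tfrac12,n)/B(\tfrac12,n) = \frac{\Gamma(k+1/2)}{\Gamma(1/2)}\cdot\frac{\Gamma(n+1/2)}{\Gamma(n+k+1/2)}$.

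Next I would simplify using the identity $\Gamma(k+\tfrac12)/\Gamma(\tfrac12) = (2k)!/(4^k k!)$, so that $\frac{t^{2k}}{(2k)!}\mathbf{E}[X^{2k}] = \frac{1}{k!}\big(\tfrac{t^2}{4}\big)^k\cdot\frac{\Gamma(n+1/2)}{\Gamma(n+k+1/2)}$. The key estimate is then the telescoping bound $\frac{\Gamma(n+1/2)}{\Gamma(n+k+1/2)} = \prod_{j=0}^{k-1}\frac{1}{n+1/2+j} \le (n+\tfrac12)^{-k}$, since each factor is at most $(n+1/2)^{-1}$; this is exactly where the constant $4n+2$ enters. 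Substituting gives $\frac{t^{2k}}{(2k)!}\mathbf{E}[X^{2k}] \le \frac{1}{k!}\big(\tfrac{t^2}{4n+2}\big)^k$, and summing over $k$ yields $\mathbf{E}[e^{tX}] \le \sum_{k\ge 0}\frac{1}{k!}\big(\tfrac{t^2}{4n+2}\big)^k = \exp\!\big(\tfrac{t^2}{4n+2}\big)$, as claimed.

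I do not expect a genuine obstacle here — the argument is a routine-but-careful Beta-function moment calculation. The points that require a little attention are (i) justifying the term-by-term integration (compactness and uniform boundedness suffice), (ii) the $n=0$ degeneracy (excluded, or treated by continuity), and (iii) correct Gamma-function bookkeeping, especially the product bound $\Gamma(n+1/2)/\Gamma(n+k+1/2)\le (n+1/2)^{-k}$. An alternative route would be to write $g(t)=\log f(t)$ and note $g''(t)=\operatorname{Var}_{\mu_t}(X)$ for the exponentially tilted family $\mu_t$, then show this variance is always $\le 1/(2n+1)$; but the power-series approach is more elementary and self-contained, so I would favor it.
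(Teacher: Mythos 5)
Your proof is correct and follows essentially the same route as the paper's: expand $e^{tx}$ into a power series, compute the even moments via the Beta integral, reduce the resulting Gamma-function ratio to a finite product, and bound each factor by $(n+\tfrac12)^{-1}$ to sum to the exponential. The one cosmetic difference is that you express $\Gamma(n+\tfrac12)/\Gamma(n+k+\tfrac12)$ directly as $\prod_{j=0}^{k-1}(n+\tfrac12+j)^{-1}$, whereas the paper detours through the Legendre duplication formula (applied somewhat informally to the real parameter $n$ via factorial notation) before arriving at the same product; your bookkeeping is a bit cleaner, but the key estimate and the final bound $\exp\bigl(t^2/(4n+2)\bigr)$ are identical.
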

\begin{proof}
We start with the following ``standard'' integral. For non-negative integer $m$ and real $n > 0$,
\begin{align*}
    \int_{-1}^{+1} x^{2m} (1 - x^2)^{n-1} \; dx
    =
    B\left( m + \frac{1}{2}, n \right)
    =
    \frac{
        \Gamma\left( m + \frac{1}{2} \right)
        \Gamma\left( n \right)
    }{
        \Gamma\left( m + n + \frac{1}{2} \right)
    }.
\end{align*}
This means that
\begin{align*}
    \frac{1}{B\left( \frac{1}{2}, n \right)} \int_{-1}^{+1} x^{2m} (1 - x^2)^{n-1} \; dx
    &=
    \frac{
        B\left( m + \frac{1}{2}, n \right)
    }{
        B\left( \frac{1}{2}, n \right)
    }
    \\&=
    \frac{
        \Gamma\left( m + \frac{1}{2} \right)
        \Gamma\left( n \right)
    }{
        \Gamma\left( m + n + \frac{1}{2} \right)
    }
    \cdot
    \frac{
        \Gamma\left( n + \frac{1}{2} \right)
    }{
        \Gamma\left( \frac{1}{2} \right)
        \Gamma\left( n \right)
    }
    \\&=
    \frac{
        \Gamma\left( m + \frac{1}{2} \right)
        \Gamma\left( n + \frac{1}{2} \right)
    }{
        \sqrt{\pi}
        \cdot 
        \Gamma\left( m + n + \frac{1}{2} \right)
    }.
\end{align*}
Applying the Legendre duplication formula, for integer $m$,
\[
    \Gamma\left(m + \frac{1}{2} \right) = \frac{(2m)! \sqrt{\pi}}{4^m m!},
\]
then
\begin{align*}
    \frac{1}{B\left( \frac{1}{2}, n \right)} \int_{-1}^{+1} x^{2m} (1 - x^2)^{n-1} \; dx
    &=
    \frac{(2m)! \sqrt{\pi}}{4^m m!}
    \cdot
    \frac{(2n)! \sqrt{\pi}}{4^n n!}
    \cdot
    \frac{1}{\sqrt{\pi}}
    \cdot
    \frac{4^{m+n} (m+n)!}{(2m+2n)! \sqrt{\pi}}
    \\&=
    \frac{(2m)!(2n)!(m+n)!}{m!n!(2m+2n)!}.
\end{align*}
In particular, this means that
\begin{align*}
    \frac{1}{B\left( \frac{1}{2}, n \right)} 
    \int_{-1}^{+1} \exp(tx) (1 - x^2)^{n-1} \; dx
    &=
    \sum_{m=0}^{\infty}
    \frac{t^{2m}}{(2m)!}
    \cdot
    \frac{1}{B\left( \frac{1}{2}, n \right)} 
    \int_{-1}^{+1} x^{2m} (1 - x^2)^{n-1} \; dx
    \\&=
    \sum_{m=0}^{\infty}
    \frac{t^{2m}}{(2m)!}
    \cdot
    \frac{(2m)!(2n)!(m+n)!}{m!n!(2m+2n)!}
    \\&=
    \sum_{m=0}^{\infty}
    \frac{t^{2m}}{m!}
    \cdot
    \frac{(2n)!(m+n)!}{n!(2m+2n)!}
    \\&=
    \sum_{m=0}^{\infty}
    \frac{t^{2m}}{m!}
    \cdot
    \prod_{k=1}^m
    \frac{k+n}{(2k+2n)(2k+2n-1)}
    \\&=
    \sum_{m=0}^{\infty}
    \frac{t^{2m}}{m!}
    \cdot
    \frac{1}{2^m}
    \prod_{k=1}^m
    \frac{1}{2k+2n-1}
    \\&\le
    \sum_{m=0}^{\infty}
    \frac{t^{2m}}{m!}
    \cdot
    \frac{1}{2^m}
    \left( \frac{1}{2n+1} \right)^m
    \\&=
    \sum_{m=0}^{\infty}
    \frac{1}{m!}
    \left( \frac{t^2}{4n+2} \right)^m
    \\&=
    \exp\left( \frac{t^2}{4n+2} \right).
\end{align*}
This proves the lemma
\[
    \frac{1}{B\left( n, 1/2 \right)} 
    \int_{-1}^{+1} (1 - x^2)^{n-1} \cdot \exp(tx) \; dx
    \le
    \exp\left( \frac{t^2}{4n+2} \right).
\]
\end{proof}

\begin{lemma}
Call $U \in \R^{nd \times nd}$ an $(n,d)$-block orthohadamard matrix if it has the following properties: (1) $U$ is a orthogonal matrix, and (2) each aligned $d \times d$ block of $U$ is $1/\sqrt{n}$ times an orthogonal matrix.
This generalizes the notion of Hadamard matrices.
Let $S \in \R^{nd \times nd}$ be a random block diagonal matrix, where each $d \times d$ block of the diagonal is sampled independently and uniformly from the set of (possibly special) orthogonal matrices.
Then we call multiplication by $US$ a \emph{randomized orthohadamard transform}, and observe that it has the following nice property.
Let $x \in \mathbb{R}^{nd}$ be any fixed vector, and let $b \in \mathbb{R}^{nd}$ be a fixed vector that is \emph{sparse} in the sense that it is supported only on a single $d$-sized aligned block (i.e. all but one of the $n$ blocks are zero). Then
\[
    \mathbf{P}\left( \Abs{b^T U S x} \ge a \right)
    \le
    2 \exp\left( -\frac{a^2 n d}{2 \norm{b}^2 \norm{x}^2} \right).
\]
\label{lemmaOrthoHadamard}
\end{lemma}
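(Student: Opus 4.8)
The plan is to expand $b^{T}USx$ as a sum of $n$ independent scalar terms, one per diagonal block of $S$, bound the moment generating function (MGF) of each term with Lemma~\ref{lemmaIntegral}, and finish with a Chernoff bound.

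\emph{Block decomposition.} First I would fix notation: say $b$ is supported on the $i_{0}$-th aligned $d$-block, with nonzero part $b_{i_{0}}\in\R^{d}$, and write $x=(x_{1},\dots,x_{n})$ and $S=\operatorname{diag}(S_{1},\dots,S_{n})$ blockwise, the $S_{j}$ being independent Haar-random orthogonal $d\times d$ matrices. With $U_{i_{0},j}$ the $(i_{0},j)$ aligned block of $U$, this gives $b^{T}USx=\sum_{j=1}^{n}c_{j}^{T}S_{j}x_{j}$ with $c_{j}:=U_{i_{0},j}^{T}b_{i_{0}}$. Since each aligned block of $U$ is $1/\sqrt{n}$ times an orthogonal matrix, $\norm{c_{j}}=\norm{b}/\sqrt{n}$ for every $j$, and the summands are independent because the $S_{j}$ are.

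\emph{Single-term MGF.} Next I would bound $\mathbf{E}[\exp(s\,c_{j}^{T}S_{j}x_{j})]$. Since $S_{j}$ is Haar on $O(d)$ (or on $SO(d)$ when $d\ge 2$, as $SO(d)$ acts transitively on $S^{d-1}$), $S_{j}x_{j}$ is uniform on the radius-$\norm{x_{j}}$ sphere, so $c_{j}^{T}S_{j}x_{j}$ has the same law as $\norm{c_{j}}\,\norm{x_{j}}\,\omega_{1}$, where $\omega_{1}$ is the first coordinate of a uniform point $\omega\in S^{d-1}$. The density of $\omega_{1}$ on $(-1,1)$ is $(1-t^{2})^{(d-3)/2}/B\!\left((d-1)/2,\,1/2\right)$, which is precisely the integrand and normalizing constant in Lemma~\ref{lemmaIntegral} with its parameter equal to $(d-1)/2$ (and $4\cdot\tfrac{d-1}{2}+2=2d$). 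Hence $\mathbf{E}[\exp(s\omega_{1})]\le\exp(s^{2}/(2d))$, so $\mathbf{E}[\exp(s\,c_{j}^{T}S_{j}x_{j})]\le\exp\!\left(s^{2}\norm{c_{j}}^{2}\norm{x_{j}}^{2}/(2d)\right)$ for all real $s$.

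\emph{Combine.} Finally, by independence of the blocks,
\[
\mathbf{E}\!\left[\exp(s\,b^{T}USx)\right]
\le
\exp\!\left(\frac{s^{2}}{2d}\sum_{j=1}^{n}\norm{c_{j}}^{2}\norm{x_{j}}^{2}\right)
=
\exp\!\left(\frac{s^{2}\norm{b}^{2}\norm{x}^{2}}{2nd}\right),
\]
using $\norm{c_{j}}^{2}=\norm{b}^{2}/n$ and $\sum_{j}\norm{x_{j}}^{2}=\norm{x}^{2}$. So $b^{T}USx$ is sub-Gaussian with variance proxy $\norm{b}^{2}\norm{x}^{2}/(nd)$, and a Chernoff bound optimized at $s=a\,nd/(\norm{b}^{2}\norm{x}^{2})$, applied to $\pm b^{T}USx$ and combined with a union bound, yields $\mathbf{P}(\Abs{b^{T}USx}\ge a)\le 2\exp\!\left(-a^{2}nd/(2\norm{b}^{2}\norm{x}^{2})\right)$, as claimed.

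I expect the only delicate point to be identifying the law of one coordinate of the uniform sphere measure and matching its normalizing constant to $B((d-1)/2,1/2)$ so Lemma~\ref{lemmaIntegral} applies verbatim; the one genuine edge case is $d=1$, where that integral formula degenerates but $S_{j}$ is merely a random sign and $\mathbf{E}[\exp(s\omega_{1})]=\cosh s\le\exp(s^{2}/2)$ recovers the same sub-Gaussian constant. The block bookkeeping, the use of independence, and the Chernoff optimization are all routine.
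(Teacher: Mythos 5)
Your proof is correct and takes essentially the same route as the paper: block-decompose $b^TUSx$ into $n$ independent terms, recognize each as $\|b\|\|x_i\|n^{-1/2}$ times a single coordinate of a uniform point on $S^{d-1}$, bound the MGF via Lemma~\ref{lemmaIntegral} with parameter $(d-1)/2$, and close with Chernoff. The only cosmetic difference is that the paper pushes the random rotation onto the $b$ side (its $v_i = (S^TU^Tb)_i$) while you push it onto the $x$ side ($c_j^TS_jx_j$); these are equivalent. One genuine improvement on your part: you explicitly flag that Lemma~\ref{lemmaIntegral} degenerates at its parameter $= 0$ (i.e.\ $d=1$, the random-sign Hadamard case that the paper actually uses) and supply the direct $\cosh s\le e^{s^2/2}$ bound, which the paper's proof silently skips.
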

\begin{proof}
If we let the $i$th block of $x$ be $x_i \in \R^d$ and let the $i$th block of $S^T U^T b^T$ be $v_i$, then the $v_i$ will be independent and uniformly distributed on the sphere in $d$ dimensional space of radius $\norm{b} / \sqrt{n}$, and so $v_i^T x_i = \norm{b} \norm{x_i} n^{-1/2} z_i$, where the $z_i$ are all independent and distributed according to an entry of a random point on the unit sphere in $d$ dimensional space. Observe that this means that
\[
    \mathbf{P}(z_i) \propto (1 - z_i^2)^{\frac{d-1}{2} - 1}.
\]
So,
\begin{align*}
    \Exv{ \exp\left( t b^T U S x \right) }
    &=
    \Exv{ \exp\left( t \sum_{i=1}^n \norm{b} \norm{x_i} n^{-1/2} z_i \right) }
    \\&=
    \prod_{i=1}^n \Exv{ \exp\left( t \norm{b} \norm{x_i} n^{-1/2} z_i \right) }
    \\&\le
    \prod_{i=1}^n \Exv{ \exp\left( \frac{1}{4\cdot \frac{d-1}{2} + 2} \right) \left( t \norm{b} \norm{x_i} n^{-1/2} \right)^2 }
    \\&=
    \prod_{i=1}^n \Exv{ \exp\left( \frac{t^2 \norm{b}^2 \norm{x_i}^2}{2nd} \right) }
    \\&=
    \Exv{ \exp\left( \frac{t^2 \norm{b}^2 \norm{x}^2}{2nd} \right) },
\end{align*}
where the the last line follows from Lemma~\ref{lemmaIntegral}.
It follows from the standard application of Markov's inequality that for any $a > 0$,
\[
    \mathbf{P}\left( \Abs{b^T U S x} \ge a \right)
    \le
    2 \exp\left( -\frac{a^2 n d}{2 \norm{b}^2 \norm{x}^2} \right).
\]
This is what we wanted to show.
\end{proof}

\begin{lemma}
\label{lemma:H_hadfftconcent}
Let $H \in \mathbb{R}^{n \times n}$ be an orthogonal scaled Hadamard matrix or $F \in \mathbb{R}^{n \times n}$ be an orthogonal FFT matrix (the FFT understood as operating on a real vector space). Let $S \in \mathbb{R}^{n \times n}$ be a random diagonal matrix with diagonal elements supported on $\R^n$, and let $P \in \mathbb{R}^{n \times n}$ be a random 2-block-diagonal matrix with $2 \times 2$ diagonal blocks supported on $\operatorname{SO}(2)$ (we can also think of this as acting like a diagonal complex matrix with each diagonal element a random complex number of absolute value $1$).
Let $U \in \mathbb{R}^{n \times n}$ be any fixed orthogonal matrix.
Then, for any $\epsilon > 0$,
\[
    \Prob{ \max_{i,j} \Abs{e_i^T H S U e_j} \ge \sqrt{
        \frac{2}{nd} \log\left( \frac{2 n^2}{\epsilon} \right)
    }} \le \epsilon
\]
and
\[
    \Prob{ \max_{i,j} \Abs{e_i^T F P U e_j} \ge \sqrt{
        \frac{2}{nd} \log\left( \frac{2 n^2}{\epsilon} \right)
    }} \le \epsilon.
\]
That is, with probability at least $1 - \epsilon$, multiplying by either $HS$ or $FP$ makes the resulting orthogonal matrix $\mu$-incoherent, where
\[
    \mu_H = \sqrt{
        2 \log\left( \frac{2 n^2}{\epsilon} \right)
    }.
\]
\end{lemma}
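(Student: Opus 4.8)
The plan is to derive Lemma~\ref{lemma:H_hadfftconcent} directly from the general concentration estimate of Lemma~\ref{lemmaOrthoHadamard}, applied once per entry of the transformed matrix and then combined with a union bound. The first step is to recognize that both $HS$ and $FP$ are instances of the randomized orthohadamard transform of Lemma~\ref{lemmaOrthoHadamard}. A scaled Hadamard matrix $H\in\R^{n\times n}$ is an $(n,1)$-block orthohadamard matrix: it is orthogonal and each $1\times1$ ``block'' is $\pm1/\sqrt n$; the random sign diagonal $S$ is then a random block-diagonal matrix with $1\times1$ blocks uniform on $\mathrm{O}(1)=\{\pm1\}$. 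For the FFT, I regard $\R^n$ as $n/2$ complex coordinates, so that (with unitary normalization) the $(j,k)$ block of $F$ equals $1/\sqrt{n/2}$ times the $2\times2$ rotation that implements multiplication by the unit complex number $\omega^{jk}$, i.e.\ $F$ is an $(n/2,2)$-block orthohadamard matrix; the phase matrix $P$, whose $2\times2$ diagonal blocks are uniform on $\mathrm{SO}(2)$, is an admissible random block-diagonal matrix. In both cases the number of blocks times the block size equals the matrix dimension $n$, so $n$ is the quantity playing the role of ``$nd$'' both in Lemma~\ref{lemmaOrthoHadamard} and in the present statement.

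Next I would fix a pair of indices $(i,j)$ and apply Lemma~\ref{lemmaOrthoHadamard} with orthohadamard matrix $H$ (resp.\ $F$), random block-diagonal $S$ (resp.\ $P$), sparse test vector $b=e_i$ --- which is trivially supported on a single aligned block --- and fixed vector $x=Ue_j$. Since $U$ is orthogonal, $\norm{x}=\norm{e_i}=1$, so for every $a>0$
\[
  \Prob{\Abs{e_i^{T} H S U e_j}\ge a}\ \le\ 2\exp\!\left(-\frac{a^{2} n}{2}\right),
\]
and the identical bound holds with $HS$ replaced by $FP$. Taking $a=\sqrt{\tfrac{2}{n}\log(2n^{2}/\epsilon)}$ makes the exponent $-\log(2n^{2}/\epsilon)$, so the right-hand side is $\epsilon/n^{2}$. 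A union bound over the $n^{2}$ choices of $(i,j)$ then gives
\[
  \Prob{\max_{i,j}\Abs{e_i^{T} H S U e_j}\ \ge\ \sqrt{\tfrac{2}{n}\log\!\big(2n^{2}/\epsilon\big)}}\ \le\ \epsilon,
\]
and likewise for $FPU$.

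To finish, I recall that a matrix $Q$ is $\mu$-incoherent exactly when $\max_{i,j}\Abs{Q_{ij}}\le\mu/\sqrt n$; since $\sqrt{\tfrac{2}{n}\log(2n^{2}/\epsilon)}=\mu_H/\sqrt n$ with $\mu_H=\sqrt{2\log(2n^{2}/\epsilon)}$, the displayed bound says precisely that the orthogonal matrix $HSU$ (resp.\ $FPU$) is $\mu_H$-incoherent with probability at least $1-\epsilon$, which is the claim. The only steps that need genuine care are the orthohadamard verification for the FFT --- checking that, after grouping coordinates into complex pairs, every $2\times2$ aligned block of $F$ is indeed a scaled rotation (hence a scaled orthogonal matrix) so that Lemma~\ref{lemmaOrthoHadamard} applies --- and keeping the dimension bookkeeping straight so that ``$nd$'' is correctly read as the ambient dimension $n$ in both the $d=1$ (Hadamard) and $d=2$ (FFT) instantiations; the rest is the one-line Chernoff-plus-union-bound computation already packaged inside Lemma~\ref{lemmaOrthoHadamard}.
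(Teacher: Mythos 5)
Your proof is correct and takes essentially the same route as the paper's: instantiate Lemma~\ref{lemmaOrthoHadamard} with $b=e_i$, $x=Ue_j$ (using $\norm{Ue_j}=1$), plug in $a^2 = \tfrac{2}{n}\log(2n^2/\epsilon)$, and union-bound over the $n^2$ entries. The only difference is that you spell out the $(n,1)$ versus $(n/2,2)$ orthohadamard block bookkeeping and the FFT-as-rotation-blocks verification, which the paper's one-line proof leaves implicit with "the FFT case is identical."
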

\begin{proof}
Setting $b = e_i$ and $x = U e_j$ in Lemma~\ref{lemmaOrthoHadamard},
\[
    \mathbf{P}\left( \Abs{e_i^T H S U e_j} \ge a \right)
    \le
    2 \exp\left( -\frac{a^2 n d}{2} \right).
\]
By the union bound,
\[
    \mathbf{P}\left( \max_{i,j} \Abs{e_i^T H S U e_j} \ge a \right)
    \le
    2n^2 \exp\left( -\frac{a^2 n d}{2} \right).
\]
Setting
\[
    a^2 = \frac{2}{nd} \log\left( \frac{2 n^2}{\epsilon} \right)
\]
proves the lemma. The FFT case is identical.
\end{proof}

\begin{lemma}
\label{lemma:W_hadfftconcent}
Let $H_L \in \mathbb{R}^{m \times m}$ be an orthogonal scaled Hadamard matrix or $F \in \mathbb{R}^{m \times m}$ be an orthogonal FFT matrix (the FFT understood as operating on a real vector space). Let $S_L \in \mathbb{R}^{m \times m}$ be a random diagonal matrix with diagonal elements supported on $\R^m$, and let $P \in \mathbb{R}^{m \times m}$ be a random 2-block-diagonal matrix with $2 \times 2$ diagonal blocks supported on $\operatorname{SO}(2)$. Let $H_R \in \mathbb{R}^{n \times n}$, $F_R$, $S_R$, and $P_R$ be defined analogously over $n$-dimensional space.
Let $W \in \mathbb{R}^{m \times n}$ be any fixed matrix.
Then, for any $\epsilon > 0$,
\[
    \mathbf{P}\left( \max_{i,j} \Abs{e_i^T H_L S_L W S_R^T H_R^T e_j} \ge \norm{W}_F \sqrt{ \frac{4}{mn} } \log\left( \frac{4mn}{\epsilon} \right) \right)
    \le
    \epsilon.
\]
and
\[
    \mathbf{P}\left( \max_{i,j} \Abs{e_i^T F_L P_L W P_R^T F_R^T e_j} \ge \norm{W}_F \sqrt{ \frac{4}{mn} } \log\left( \frac{4mn}{\epsilon} \right) \right)
    \le
    \epsilon.
\]
That is, with probability at least $1 - \epsilon$, multiplying on both sides by a randomized Hadamard transform or a randomized FFT yields a weight matrix that is $\mu_W$-incoherent, where
\[
    \mu_W = 2 \log\left( \frac{4 m n}{\epsilon} \right).
\]
\end{lemma}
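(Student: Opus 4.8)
The plan is to reduce the two-sided claim to two successive applications of Lemma~\ref{lemmaOrthoHadamard}, one for the right-hand transform and one for the left-hand transform, linked through a bound on the column norms of the intermediate matrix. I will carry out the randomized Hadamard case, i.e.\ $H_L,H_R$ orthogonal scaled Hadamard and $S_L,S_R$ random $\pm1$ diagonals; the randomized FFT case is word-for-word identical, since $F_LP_L$ and $F_RP_R$ are also randomized orthohadamard transforms (with $2\times2$ $\operatorname{SO}(2)$ blocks) and Lemma~\ref{lemmaOrthoHadamard} is stated at exactly that level of generality.

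First I would apply the right transform alone. Set $B = W S_R^T H_R^T$, so the matrix of interest is $A := H_L S_L B$. For each index pair $(k,j)$, write
\[
    B_{kj} = e_k^T W S_R^T H_R^T e_j = e_j^T (H_R S_R)(W^T e_k),
\]
which has the form $b^T U S x$ of Lemma~\ref{lemmaOrthoHadamard} with $b=e_j$ (supported on a single coordinate, hence on one aligned block), $US = H_R S_R$, and $x = W^T e_k$ the fixed $k$-th row of $W$. The lemma gives $\mathbf{P}(\Abs{B_{kj}}\ge a_1)\le 2\exp(-a_1^2 n/(2\norm{W^T e_k}^2))$. Taking the per-row threshold $a_1 = \norm{W^T e_k}\sqrt{(2/n)\log(4mn/\epsilon)}$ and union-bounding over all $mn$ pairs, the event that $\Abs{B_{kj}}\le\norm{W^T e_k}\sqrt{(2/n)\log(4mn/\epsilon)}$ for all $k,j$ holds with probability at least $1-\epsilon/2$; call it $G_1$, and note it depends only on $S_R$. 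On $G_1$, for every column $j$,
\[
    \norm{B_{:,j}}^2 = \sum_{k=1}^m B_{kj}^2 \le \frac{2}{n}\log\!\left(\frac{4mn}{\epsilon}\right)\sum_{k=1}^m \norm{W^T e_k}^2 = \frac{2}{n}\log\!\left(\frac{4mn}{\epsilon}\right)\norm{W}_F^2 =: \rho^2,
\]
because the squared row norms of $W$ sum to $\norm{W}_F^2$.

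Next I would apply the left transform. Condition on $S_R$ and restrict to $G_1$; then each $B_{:,j}$ is a fixed vector with $\norm{B_{:,j}}\le\rho$, and $S_L$ is still independent. For each $(i,j)$, $A_{ij} = e_i^T(H_L S_L)B_{:,j}$ again has the form $b^T U S x$ with $b=e_i$, $US=H_L S_L$, $x=B_{:,j}$, so Lemma~\ref{lemmaOrthoHadamard} gives $\mathbf{P}(\Abs{A_{ij}}\ge a \mid G_1)\le 2\exp(-a^2 m/(2\norm{B_{:,j}}^2))\le 2\exp(-a^2 m/(2\rho^2))$. Choosing $a = \norm{W}_F\sqrt{4/(mn)}\,\log(4mn/\epsilon)$ makes the exponent equal $\log(4mn/\epsilon)$, so each term is $\epsilon/(2mn)$; union-bounding over the $mn$ pairs $(i,j)$ gives $\mathbf{P}(\max_{i,j}\Abs{A_{ij}}\ge a\mid G_1)\le\epsilon/2$. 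Combining with $\mathbf{P}(G_1)\ge1-\epsilon/2$ yields $\mathbf{P}(\max_{i,j}\Abs{A_{ij}}\ge a)\le\epsilon$, and since $a = \mu_W\norm{W}_F/\sqrt{mn}$ with $\mu_W = 2\log(4mn/\epsilon)$, this is exactly the stated bound. The RFFT inequality is obtained by repeating the argument verbatim.

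The only genuine idea here is the chaining between the two steps: a single invocation of Lemma~\ref{lemmaOrthoHadamard} puts only one of $m$ or $n$ into the exponent and hence yields only a $\sqrt{\log}$ bound, whereas the two-sided transform should give $\log$ with an extra $1/\sqrt{n}$. The trick is to control the intermediate column norms $\norm{B_{:,j}}$ not by the crude deterministic inequality $\norm{B_{:,j}}^2\le\norm{W}_F^2$ (which loses the improvement) and not by a Hanson--Wright-style quadratic-form concentration (overkill, and unavailable from the stated lemmas), but simply by summing the entrywise tail bounds on $B$ that the right transform has already produced. Once that observation is in place, the rest is routine union bounds and constant bookkeeping; the one small thing to check is that $e_i$ and $e_j$ qualify as a vector ``supported on a single aligned $d$-block'' in both the $d=1$ (Hadamard) and $d=2$ (FFT) instantiations, which they trivially do.
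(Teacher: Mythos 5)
Your proposal is correct, and it is exactly the argument the paper's terse proof is gesturing at: the paper simply says ``apply once on each side to the rows and columns respectively, and union bound over the $mn$ entries,'' and your write-up supplies the content of that sentence — the entrywise tail bound on the right-transformed matrix, the resulting bound $\norm{B_{:,j}}^2 \le \frac{2}{n}\log(4mn/\epsilon)\norm{W}_F^2$ on each intermediate column norm (rather than the lossy deterministic bound $\norm{W}_F^2$), the conditional application of Lemma~\ref{lemmaOrthoHadamard} for the left transform using independence of $S_L$ from $S_R$, and the two union bounds over $mn$ entries that account for the $4mn$ inside the logarithm. The observation that $e_i, e_j$ are trivially block-aligned for both $d=1$ and $d=2$ is the right check for porting the argument to the FFT case.
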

\begin{proof}
From Lemma~\ref{lemmaOrthoHadamard},
\[
    \mathbf{P}\left( \Abs{b^T U S x} \ge \norm{b} \norm{x} \sqrt{ \frac{2}{n} \log\left( \frac{4mn}{\epsilon} \right) } \right)
    \le
    \frac{\epsilon}{2mn}.
\]
By applying this once on each side to the rows and columns respectively, and union bounding over the $mn$ entries, we get
\[
    \mathbf{P}\left( \Abs{e_i^T H_L S_L W S_R^T H_R^T e_j} \ge \norm{W}_F \sqrt{ \frac{4}{mn} } \log\left( \frac{4mn}{\epsilon} \right) \right)
    \le
    \epsilon.
\]
The proof in the FFT case is identical.
\end{proof}

\lemmahadincoh*
\begin{proof}
The incoherence of $H$ follows from the application of Lemma~\ref{lemma:H_hadfftconcent}.
The incoherence of $W$ follows from the application of Lemma~\ref{lemma:W_hadfftconcent}.
\end{proof}

\subsection{Incoherence Processing with the Randomized Fast Fourier Transform (RFFT)}
\label{sec:fft}
\begin{algorithm}[t]
\caption{Incoherence Processing with RFFT (IP-RFFT)}
\begin{algorithmic}
\INPUT $W \in \mathbb{R}^{m \times n}, H \in \mathbb{R}^{n \times n}$
\STATE Sample phase vectors 
\\ \hspace{2.5em} $\theta_V \sim \mathcal{U}[0, 2\pi]^{n/2}$, $\theta_U \sim \mathcal{U}[0, 2\pi]^{m/2}$
\\ \hspace{2.5em} $S_V = \cos(\theta_V) + i \sin(\theta_V)$,  $S_U = \cos(\theta_U) + i \sin(\theta_U)$
\STATE $\hat W \gets \texttt{FFT}(diag(S_U)\texttt{FFT}(diag(S_V) W^T)^T)$ where 
\\ \hspace{2.5em} $\texttt{FFT}$ is the Fast Fourier transform (Sec. \ref{sec:fft})
\STATE $\hat H \gets \texttt{FFT}(diag(S_V) \texttt{FFT}(diag(S_V) H)^T)$
\OUTPUT $\hat W, \hat H, S_U, S_V$
\end{algorithmic}
\label{alg:RFFT}
\end{algorithm}

 Here we described the Randomized Fast Fourier Transform (RFFT), $x \to V S x$ where $V \in \mathbb{C}^{n/2 \times n/2}$ is the discrete Fourier transform matrix, $S \in \mathbb{C}^{n/2}$ is a random complex phase vector, and $x \in \mathbb{R}^n$. 
 The discrete Fourier transform can be computed in $O(n \log n)$ time via the fast Fourier transform.
 Here it is understood that the FFT operates over the reals, in that a vector $x \in \mathbb{R}^n$ is mapped to a complex representation $\mathbb{C}^{n/2}$, the RFFT is performed, and the resulting vector mapped back to $\mathbb{R}^n$.
 Here the mapping simply represents reshaping real-valued $x$ into dimension $(n/2,2)$, and interpreting the corresponding 2-tuples as a complex number.

Incoherence processing via the RFFT achieves similar theoretical guarantees as the RHT, see Lemmas~\ref{lemma:H_hadfftconcent} and~\ref{lemma:W_hadfftconcent}.
Ultimately the choice of the orthogonal transformation is up to the user.
A Fourier transform works almost as well as a Hamard transform in practice (Table~\ref{tab:ablate_incoherence}), so if a fast Hadamard implementation is not available, the FFT is a good option.

\section{Block LDLQ}
\label{sec:blockldlqtheory}

\begin{lemma}
\label{lemma:C1}
Let $H \in \mathbb{R}^{nd \times nd}$ be a positive definite matrix with $d$-block LDL decomposition $H = L^T D L$. Then
\[
    \trace{D} \le \trace{H^{1/2}} \cdot \norm{ H^{1/2} \odot M_D }_2,
\]
where $M_D = I \otimes \mathbf{1}_{d \times d}$ is the block diagonal mask.
If, in addition, $H$ is $\mu$-incoherent in the sense that its matrix of eigenvectors $U$ has
\[
    \| U_{ij} \| \le \frac{\mu}{\sqrt{nd}},
\]
then
\[
    \trace{D} \le \frac{\mu^2}{n} \trace{H^{1/2}}^2.
\]
\end{lemma}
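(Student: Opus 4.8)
The plan is to reduce both bounds to one identity that writes the diagonal blocks of $\mathbf{D}$ in terms of $H^{1/4}$ and an orthonormal family of frames that resolves the identity, and then apply submultiplicativity of the matrix norms (here $\norm{\cdot}_2$ is the spectral norm). First I would set up a ``block QR'' of $H^{1/2}$. From the $d$-block LDL factorization $H = \mathbf{L}^T \mathbf{D}\mathbf{L}$, set $R = \mathbf{D}^{1/2}\mathbf{L}$; this is block lower triangular, satisfies $R^T R = H$, and has $k$-th diagonal block $R_{[kk]} = \mathbf{D}_{[kk]}^{1/2}$ (which exists since $H$ positive definite forces each Schur complement $\mathbf{D}_{[kk]}$ to be positive definite), so $\trace{\mathbf{D}} = \sum_k \trace{\mathbf{D}_{[kk]}} = \sum_k \norm{R_{[kk]}}_F^2$. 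Since $R^T R = H = (H^{1/2})^2$, the matrix $Q = H^{1/2} R^{-1}$ is orthogonal with $H^{1/2} = QR$, hence $R = Q^T H^{1/2}$ and $R_{[kk]} = P_k^T H^{1/2} E_k$, where $E_k \in \R^{nd\times d}$ injects the $k$-th block of coordinates and $P_k = Q E_k$. The two structural facts I will use are $P_k^T P_k = I_d$ and $\sum_k P_k P_k^T = Q\bigl(\sum_k E_k E_k^T\bigr)Q^T = I_{nd}$.

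For the first inequality, write $H^{1/2} = H^{1/4} H^{1/4}$ so that $R_{[kk]} = A_k^T B_k$ with $A_k = H^{1/4} P_k$ and $B_k = H^{1/4} E_k$. Then $\norm{R_{[kk]}}_F^2 \le \norm{A_k}_F^2 \norm{B_k}_2^2 \le \norm{A_k}_F^2 \cdot \max_j \norm{B_j}_2^2$, and summing over $k$, using $\sum_k \norm{A_k}_F^2 = \trace{H^{1/2}\sum_k P_k P_k^T} = \trace{H^{1/2}}$, gives $\trace{\mathbf{D}} \le \trace{H^{1/2}}\cdot \max_j \norm{B_j}_2^2$. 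Finally $\norm{B_j}_2^2 = \norm{E_j^T H^{1/2} E_j}_2 = \norm{(H^{1/2})_{[jj]}}_2$, and since $H^{1/2}\odot M_D$ is exactly the block-diagonal matrix carrying these blocks on its diagonal, $\max_j \norm{B_j}_2^2 = \norm{H^{1/2}\odot M_D}_2$, which is the first claim.

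For the incoherence refinement, diagonalize $H = U\Lambda U^T$ and write $(H^{1/2})_{[kk]} = \sum_\ell \lambda_\ell^{1/2}\, u_\ell^{(k)}(u_\ell^{(k)})^T$, where $u_\ell^{(k)}$ is the block-$k$ subvector of the $\ell$-th column of $U$. Incoherence gives $\norm{u_\ell^{(k)}}^2 \le d\,(\mu/\sqrt{nd})^2 = \mu^2/n$, so by the triangle inequality for the spectral norm $\norm{(H^{1/2})_{[kk]}}_2 \le (\mu^2/n)\sum_\ell \lambda_\ell^{1/2} = (\mu^2/n)\trace{H^{1/2}}$; substituting into the first inequality yields $\trace{\mathbf{D}} \le (\mu^2/n)\trace{H^{1/2}}^2$.

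The step I expect to be the crux is producing the factorization $R_{[kk]} = (H^{1/4}P_k)^T (H^{1/4} E_k)$: the naive estimate only gives $\mathbf{D}_{[kk]} \preceq H_{[kk]}$, hence the weak $\trace{\mathbf{D}} \le \trace{H}$, and it is precisely re-square-rooting $H^{1/2}$ once more and splitting the diagonal block across the two halves---so that one half is controlled globally in Frobenius norm and the other locally in spectral norm---that upgrades this to the sharp $H^{1/2}$-based estimate. The remaining manipulations (orthogonal projections, cyclicity of the trace, and $\norm{XY}_F \le \norm{X}_F\norm{Y}_2$) are routine.
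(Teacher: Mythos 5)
Your proof is correct, and it is a genuinely different argument from the one in the paper. The paper's proof is variational: it observes that $\trace{D} = \min\{\trace{R^T H R} : R \text{ unit block lower triangular}\}$ (since the gradient $HR = L^T D$ at $R = L^{-1}$ is block upper triangular, hence orthogonal to the feasible perturbations), then plugs in the specific test point $R = (I + \alpha\, M \odot H^{1/2})^{-1}$ with $\alpha = \norm{H^{1/2} \odot M_D}_2^{-1}$ and verifies, via a chain of Loewner inequalities (the key step being $I \succeq \alpha\, M_D \odot H^{1/2}$ and $(M+M^T+M_D)\odot H^{1/2} = H^{1/2}$), that $RR^T \preceq \alpha^{-1} H^{-1/2}$ and hence $\trace{HRR^T} \le \alpha^{-1}\trace{H^{1/2}}$. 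Your argument instead is constructive: you build the block Cholesky factor $R = D^{1/2}L$, use the block-QR identity $R = Q^T H^{1/2}$ to write $D_{[kk]}^{1/2} = R_{[kk]} = (H^{1/4}P_k)^T(H^{1/4}E_k)$, and finish with $\norm{XY}_F \le \norm{X}_F \norm{Y}_2$ together with the resolution of identity $\sum_k P_k P_k^T = I$. Both routes isolate $H^{1/2}$ and split it so that one factor is controlled globally in trace/Frobenius norm and the other locally in spectral norm; but where the paper conjures a test matrix whose form is not obvious a priori, your proof explains exactly where $\trace{D}$ lives---the diagonal blocks of the block-$QR$ factor of $H^{1/2}$---and makes the inequality a single application of the mixed Frobenius--spectral norm bound. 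A side benefit of your route is that it makes the identity $\norm{H^{1/2}\odot M_D}_2 = \max_k \norm{(H^{1/2})_{[kk]}}_2$ explicit, which the paper bounds only implicitly via the eigendecomposition; your treatment of the incoherence step (bounding $\norm{u_\ell^{(k)}}^2 \le \mu^2/n$ and summing $\lambda_\ell^{1/2}$) is otherwise essentially the same as the paper's.
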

\begin{proof}
Consider the optimization problem
\begin{align*}
\mbox{minimize: } & \trace{R^T H R}
\\\mbox{subject to: } & R \text{ unit block lower diagonal}.
\end{align*}
Observe that the derivative of the loss is
\[
    \nabla f(R) = HR.
\]
If $R = L^{-1}$, then $HR = L^T D$.
But this must be a block upper triangular matrix, because it's the product of a unit upper triangular matrix ($L^T$) and a block diagonal matrix $D$. It follows that $\nabla f(L^{-1})$ is zero in all the directions in which we could move $R$, since $R$ only varies in the strictly lower triangular directions. Therefore, $R = L^{-1}$ is the solution to this optimization problem, and for any $R$, $\nabla f(R) \ge \nabla f(L^{-1}) = \trace{D}$.

Now, let $M$ denote the strictly block lower triangular mask, and observe that $M + M^T + M_D = \mathbf{1}_{nd \times nd}$.
Set $\alpha = \norm{ H^{1/2} \odot M_D }_2^{-1}$, and consider $R = \left( I + \alpha M \odot H^{1/2} \right)^{-1}$.
Observe that
\begin{align*}
    \left( I + \alpha M \odot H^{1/2} \right)^T \left( I + \alpha M \odot H^{1/2} \right)
    &=
    I + \alpha M \odot H^{1/2} + \alpha M^T \odot H^{1/2} + \alpha^2 (M^T \odot H^{1/2}) (M \odot H^{1/2})
    \\&\succeq
    I + \alpha (M + M^T) \odot H^{1/2}
    \\&\succeq
    \alpha M_D \odot H^{1/2} + \alpha (M + M^T) \odot H^{1/2}
    \\&\succeq
    \alpha H^{1/2}.
\end{align*}
It follows by inverting both sides that $R R^T \preceq  \alpha^{-1} H^{-1/2}$.

So, for this $R$,
\[
    \trace{R^T H R}
    =
    \trace{H R R^T}
    \le
    \alpha^{-1} \trace{H^{1/2}}.
\]
This proves the first part of the lemma.
For the second part, observe that
\begin{align*}
    \norm{ H^{1/2} \odot M_D }_2
    &\le
    \sum_{i=1}^{nd} \lambda_i^{1/2} \norm{ (u_i u_i^T) \odot M_D }_2
    \\&\le
    \frac{\mu^2}{n} \trace{H^{1/2}}.
\end{align*}
This proves the lemma.
\end{proof}

\thmLDLQ*
\begin{proof}
First recall that from the description of block LDLQ,
\[
\hat W_k = \mathbf{Q}(W_k + (W_{:(k-1)} - \hat W_{:(k-1)})\mathbf{A}_k).
\]
We can also write this in matrix form in terms of the matrix $\mathbf{L}_k$ as
\[
\hat W = \mathbf{Q}(W + (W - \hat W)( \mathbf{L}^T - I )).
\]
Here, $\mathbf{Q}$ is interpreted as operating independently block-wise. Let $\eta$ denote the quantization error
\[
    \eta = (W + (W - \hat W)( \mathbf{L}^T - I )) -  \mathbf{Q}(W + (W - \hat W)( \mathbf{L}^T - I )).
\]
Then
\[
\hat W = (W + (W - \hat W)( \mathbf{L}^T - I )) - \eta,
\]
which simplifies to
\[
(W - \hat W) \mathbf{L}^T = \eta.
\]
This means that
\[
\mathbf{E}[\operatorname{tr}((\hat W - W)H(\hat W - W)^T)]
=
\mathbf{E}[\operatorname{tr}((\hat W - W)\mathbf{L}^T \mathbf{D} \mathbf{L} (\hat W - W)^T)]
=
\mathbf{E}[\operatorname{tr}(\eta^T \mathbf{D} \eta)].
\]
But by assumption, $\mathbf{E}[\eta \eta^T] \preceq m \sigma^2 I$ (since each block is just an independent application of $\mathbf{Q}$ and we sum over $m$ rows), so
\[
\mathbf{E}[\operatorname{tr}((\hat W - W)H(\hat W - W)^T)]
\le
m \sigma^2 \mathbf{E}[\operatorname{tr}(\mathbf{D})].
\]
Combining this with the result of Lemma~\ref{lemma:C1} proves the theorem.
\end{proof}

\section{E8P details}

\subsection{Constructing $S$}
\label{sec:e8ps}

We use the following 29 elements of $\hat D_8$ with norm squared 12 to pad $S$ to 256 entries.

\begin{verbatim}
   ([3, 1, 1, 1, 3, 3, 3, 3] [1, 3, 1, 1, 3, 3, 3, 3] [1, 1, 3, 1, 3, 3, 3, 3]
    [1, 1, 1, 3, 3, 3, 3, 3] [3, 3, 3, 1, 3, 3, 1, 1] [3, 3, 3, 1, 3, 1, 3, 1]
    [3, 3, 3, 1, 1, 3, 3, 1] [3, 3, 3, 1, 3, 1, 1, 3] [3, 3, 3, 1, 1, 3, 1, 3]
    [3, 3, 3, 1, 1, 1, 3, 3] [3, 3, 1, 3, 3, 3, 1, 1] [3, 3, 1, 3, 3, 1, 3, 1]
    [3, 3, 1, 3, 1, 3, 3, 1] [3, 3, 1, 3, 3, 1, 1, 3] [3, 3, 1, 3, 1, 3, 1, 3]
    [3, 3, 1, 3, 1, 1, 3, 3] [3, 1, 3, 3, 3, 3, 1, 1] [3, 1, 3, 3, 3, 1, 3, 1]
    [3, 1, 3, 3, 1, 3, 3, 1] [3, 1, 3, 3, 3, 1, 1, 3] [3, 1, 3, 3, 1, 3, 1, 3]
    [1, 3, 3, 3, 1, 1, 3, 3] [1, 3, 3, 3, 3, 3, 1, 1] [1, 3, 3, 3, 3, 1, 3, 1]
    [1, 3, 3, 3, 1, 3, 3, 1] [1, 3, 3, 3, 3, 1, 1, 3] [1, 3, 3, 3, 1, 3, 1, 3]
    [1, 1, 3, 3, 1, 3, 3, 3] [3, 3, 1, 1, 3, 3, 3, 1] ) / 2
\end{verbatim}

\subsection{Example Decoding with E8P}
Here, we give an example of decoding with E8P.
In this example, the first 8 bits of the codeword encode the entry in $S$, the next 7 bits encode the 7 right sign flips, and the last bit encodes whether or not we shift by $\frac{1}{4}$.
Let the codeword be \texttt{0001010110010111}.
The first 8 bits \texttt{00010101} = 21 would indicate that we start with the 21st entry in $S$.
In this example, let that be the vector
$$s = \left\{\frac{1}{2}, \frac{1}{2}, \frac{1}{2}, \frac{3}{2}, \frac{1}{2}, \frac{1}{2}, \frac{1}{2}, \frac{1}{2}\right\},$$

which is not in $\hat{D_8}$.
Thus, $s$ requires an odd number of sign flips to get into $\hat{D_8}$.
Then, the next 7 bits \texttt{1001011} would indicate that we need to negate the 1st, 2nd, 4th, and 7th from right bits.
Since we need an odd number of sign flips, the 8th from right bit is also a sign flip.
The sign-decoded vector is then
$$\left\{-\frac{1}{2}, -\frac{1}{2}, \frac{1}{2}, \frac{3}{2}, -\frac{1}{2}, \frac{1}{2}, -\frac{1}{2}, -\frac{1}{2}\right\},$$

which we can verify is in $E_8$.
Finally, the last bit \texttt{1} indicates that we need to add $\frac{1}{4}$, so the final decoded vector is
$$\left\{-\frac{1}{4}, -\frac{3}{4}, \frac{3}{4}, \frac{7}{4}, -\frac{1}{4}, \frac{3}{4}, -\frac{1}{4}, -\frac{1}{4}\right\},$$

which is in $E_8 + \frac{1}{4}$ as desired.

For posterity, we include a copy of our CUDA kernel for matrix-vector multiplication with E8P. 
This kernel was designed for NVIDIA Ampere and newer GPUs.
The same kernel can be found at \url{https://github.com/Cornell-RelaxML/quip-sharp/blob/main/quiptools/quiptools_e8p_gemv.cu}.

\lstinputlisting[language=C++,frame=single,basicstyle=\small\ttfamily]{sections/appendix/matvec.cu}

\subsection{Why not K-Means?}
\label{sec:kmeans}

A significant motivating factor behind E8P is that post-incoherence processing, entries of $W$ are approximately Gaussian distributed.
However, E8P is uniformly distributed, which raises the question: why not use a K-means based codebook?
K-means based codebooks offer strong theoretical performance but have a few issues.
First, it is difficult to enforce symmetry in a ``learned'' K-means codebook.
This is crucial to be able to have a compressible codebook.
If we force sign symmetry by learning cluster centers on only the positive orthant of a $n$-dimensional Gaussian, we can get around this but sacrifice accuracy at the axis region.
Second, using K-means requires storing a codebook in fp16, whereas the entries of E8P can be stored as 4 bit integers.
This means that during inference, the source codebook for a 8 dimension K-means codebook will be 4 times larger than the source codebook of E8P, running the risk of a cache eviction.
Finally, we observe that empirically, E8P actually \textit{outperforms} K-means, which is somewhat interesting and suggests that allocating more information to the edge of the distribution, even after incoherence processing, is useful.

\subsection{E8P vs. Other Codebook Constructions}

Below, we compare the quality of quantized models from E8P vs. other codebooks.
The $D_4$ lattice achieves the 4 dimensional kissing number but has lower dimensionality than $E_8$, giving poorer shaping.
An 8-dimensional K-Means codebook has similar shaping as E8P but worse packing density.
Although RHT-transformed weights are approximately Gaussian and not Uniform, we find that a Uniform codebook (E8P) performs better than a Gaussian codebook (K-means)

\begin{table}[h]
\caption{E8P and $E_8$-based codebooks outperform other codebooks based on lower-dimensional lattices or different distributions. Numbers without fine-tuning.}
\centering
\sc
\begin{tabular}{@{}cccccc@{}}
\toprule
Model          & Codebook      & Codebook Dim. & Bits       & Wiki2 PPL (ctx 4096) & C4 PPL (ctx 4096, c4_new) \\ \midrule
2-70B          & FP16          & 1             & 16         & 3.120                & 5.533                     \\
\textbf{2-70B} & \textbf{E8P}  & \textbf{8}    & \textbf{2} & \textbf{4.156}       & \textbf{6.535}            \\
2-70B          & $E_8$ lattice & 8             & 2.37       & 3.702                & 6.082                     \\
2-70B          & $D_4$ lattice & 4             & 2          & 4.408                & 6.797                     \\
2-70B          & $D_4$ lattice & 4             & 2.21       & 3.970                & 6.332                     \\
2-70B          & K-Means       & 8             & 2          & 4.452                & 6.925                     \\ \bottomrule
\end{tabular}
\end{table}

\section{Fine-Tuning During Quantization}
\label{sec:ft}
In Algorithm~\ref{alg:FT} we describe our fine tuning procedure for \qs.

\begin{algorithm}
\caption{\qss with Fine-Tuning}
\label{alg:FT}
\begin{algorithmic}
\REQUIRE Unquantized Model $M$, Development Set $\mathcal{D}$, Quantization Order $O$,
\ENSURE Quantized Model $M$
\STATE $X \gets M_{\mbox{embedding}}(\mathcal{D})$
\STATE $C \gets M(\mathcal{D})_{\mbox{logits}}$
\FOR {Decoder Block $D \in M$}
\STATE $Y \gets D(X)$
\STATE $X_{train}, Y_{train}, X_{valid}, Y_{valid} \gets \mbox{split}(X, Y)$
\FOR {Linear Layer $L \in D$ in order specified by $O$}
\STATE $\hat L \gets \mbox{\qs-NoFT}(L)$
\STATE Disable gradients for the weight matrix (but not $S_U, S_V)$ of $\hat L$.
\STATE Optimize $D$ to minimize $\mbox{MSE}(D(X_{train}), Y_{train})$ using $X_{valid}, Y_{valid}$ for early stopping.
\ENDFOR
\STATE $X \gets Y$
\ENDFOR

\COMMENT {At this point, the learnable parameters in $M$ are the layernorms, all $S_U$ and $S_V$, and the language model head.}
\STATE $\mathcal{D}_{train}, C_{train}, \mathcal{D}_{valid}, C_{valid} \gets \mbox{split}(\mathcal{D}, C)$
\STATE Optimize $M$ to minimize $\mbox{CrossEntropy}(M(\mathcal{D}_{train}), C_{train})$ using $\mathcal{D}_{valid}, C_{valid}$ for early stopping.
\end{algorithmic}
\end{algorithm}

\section{Additional Results}

\subsection{\qss vs. OmniQuant with Grouping}
\label{sec:omniqg}
\begin{table}[h]
\caption{\qss outperforms OmniQuant even with grouping. More numbers can be found in the OmniQuant paper.}
\sc
\centering
\begin{tabular}{@{}ccccc@{}}
\toprule
Model & Method          & Effective Bits & Wiki2 PPL (ctx 2048) & C4 PPL (ctx 2048) \\ \midrule
2-70B & FP16            & 16             & 3.32                 & 5.52              \\
2-70B & QuIP\#          & 2              & 4.16                 & 6.12              \\
2-70B & OmniQ W2A16     & 2              & 7.81                 & 12.28             \\
2-70B & OmniQ W2A16 g64 & 2.25           & 6.11                 & 7.88              \\
2-70B & OmniQ W3A16     & 3              & 3.92                 & 6.06              \\ \bottomrule
\end{tabular}
\end{table}

\subsection{\qss on Mixtral 8x7B \cite{mixtral} and Falcon 180B \cite{falcon}}

\begin{table}[H]
\caption{2 bit \qss without fine-tuning on Mixtral 8x7B, a mixture of experts (MoE), and Falcon 180B, a non-Llama-based model. \qss scales to different architectures without issue.}
\centering
\centering
\sc
\begin{tabular}{@{}ccccccccc@{}}
\toprule
Model        & Bits & Wiki2 & C4   & ArcC & ArcE & BoolQ & PiQA & Wino \\ \midrule
\rowcolor[HTML]{D9D9D9} 
Mixtral-8x7B & 16   & 3.45  & 6.85 & 0.56 & 0.74 & 0.85  & 0.84 & 0.75 \\
\rowcolor[HTML]{D9D9D9} 
Mixtral-8x7B & 2    & 4.69  & 8.25 & 0.49 & 0.68 & 0.81  & 0.80 & 0.73 \\
\rowcolor[HTML]{FFFFFF} 
Falcon-180B  & 16   & 3.30  & 6.31 & 0.61 & 0.82 & 0.87  & 0.85 & 0.81 \\
\rowcolor[HTML]{FFFFFF} 
Falcon-180B  & 2    & 4.18  & 7.06 & 0.58 & 0.81 & 0.84  & 0.84 & 0.81 \\ \bottomrule
\end{tabular}
\end{table}

\subsection{Zeroshot performance for ablation on lattice codebooks and fine-tuning}

\begin{table}[H]
\caption{Ablation on lattice codebooks and fine-tuning. \qss no FT and $E_8$ uses the RHT to perform incoherence processing but does not use lattice codebooks or fine-tuning. \qss No FT uses lattice codebooks but not fine-tuning. \qss uses lattice codebooks and performs fine-tuning.}
\centering
\sc
\begin{tabular}{@{}cccccccc@{}}
\toprule
Model & Method             & Bits & \begin{tabular}[c]{@{}c@{}}ArcC\\ (acc\_norm)\end{tabular} & \begin{tabular}[c]{@{}c@{}}ArcE\\ (acc\_norm)\end{tabular} & \begin{tabular}[c]{@{}c@{}}BoolQ \\ (acc)\end{tabular} & \begin{tabular}[c]{@{}c@{}}PiQA \\ (acc\_norm)\end{tabular} & \begin{tabular}[c]{@{}c@{}}Wino \\ (acc)\end{tabular} \\ \midrule
2-70  & Native             & 16   & 48.0                                                       & 59.7                                                       & 76.6                                                   & 80.9                                                        & 76.8                                                  \\
\rowcolor[HTML]{D9D9D9} 
2-70  & QuIP\# no FT \& no $E_8$ & 4    & \textbf{49.4}                                              & \textbf{60.1}                                              & 77.6                                                   & \textbf{80.7}                                               & 76.1                                                  \\
\rowcolor[HTML]{D9D9D9} 
2-70  & QuIP\# No FT       & 4    & 48.3                                                       & \textbf{60.1}                                              & \textbf{78.4}                                          & 80.6                                                        & 76.2                                                  \\
\rowcolor[HTML]{D9D9D9} 
2-70  & QuIP\#             & 4    & 48.3                                                       & 59.4                                                       & 77.4                                                   & \textbf{80.7}                                               & \textbf{77.1}                                         \\
2-70  & QuIP\# no FT \& no $E_8$ & 3    & 47.4                                                       & 59.1                                                       & 75.8                                                   & \textbf{80.9}                                               & \textbf{77.5}                                         \\
2-70  & QuIP\# No FT       & 3    & 47.9                                                       & \textbf{59.9}                                              & \textbf{78.8}                                          & 79.9                                                        & 77.0                                                  \\
2-70  & QuIP\#             & 3    & \textbf{48.4}                                              & 59.5                                                       & 74.8                                                   & 80.3                                                        & 76.4                                                  \\
\rowcolor[HTML]{D9D9D9} 
2-70  & QuIP\# no FT \& no $E_8$ & 2    & 43.5                                                       & 56.2                                                       & 75.1                                                   & 78.1                                                        & \textbf{76.0}                                         \\
\rowcolor[HTML]{D9D9D9} 
2-70  & QuIP\# No FT       & 2    & 47.2                                                       & \textbf{59.5}                                              & 79.1                                                   & 78.6                                                        & 74.2                                                  \\
\rowcolor[HTML]{D9D9D9} 
2-70  & QuIP\#             & 2    & \textbf{47.7}                                              & 59.1                                                       & \textbf{80.3}                                          & \textbf{79.4}                                               & 75.9                                                  \\
2-13  & Native             & 16   & 44.3                                                       & 58.0                                                       & 69.0                                                   & 79.0                                                        & 69.9                                                  \\
\rowcolor[HTML]{D9D9D9} 
2-13  & QuIP\# no FT \& no $E_8$ & 4    & 43.7                                                       & \textbf{58.6}                                              & \textbf{70.1}                                          & 78.7                                                        & 69.6                                                  \\
\rowcolor[HTML]{D9D9D9} 
2-13  & QuIP\# No FT       & 4    & 42.9                                                       & 56.4                                                       & 67.8                                                   & \textbf{78.9}                                               & \textbf{69.9}                                         \\
\rowcolor[HTML]{D9D9D9} 
2-13  & QuIP\#             & 4    & \textbf{44.2}                                              & 57.7                                                       & 69.7                                                   & \textbf{78.9}                                               & \textbf{69.9}                                         \\
2-13  & QuIP\# no FT \& no $E_8$ & 3    & 42.1                                                       & 55.2                                                       & 70.0                                                   & 77.8                                                        & \textbf{69.5}                                         \\
2-13  & QuIP\# No FT       & 3    & 41.9                                                       & \textbf{57.7}                                              & \textbf{73.3}                                          & 78.1                                                        & 68.0                                                  \\
2-13  & QuIP\#             & 3    & \textbf{43.3}                                              & \textbf{57.7}                                              & 69.8                                                   & \textbf{78.4}                                               & 69.1                                                  \\
\rowcolor[HTML]{D9D9D9} 
2-13  & QuIP\# no FT \& no $E_8$ & 2    & 36.3                                                       & 50.8                                                       & 67.4                                                   & 73.4                                                        & 63.1                                                  \\
\rowcolor[HTML]{D9D9D9} 
2-13  & QuIP\# No FT       & 2    & 37.1                                                       & 50.1                                                       & 66.5                                                   & 75.7                                                        & 63.6                                                  \\
\rowcolor[HTML]{D9D9D9} 
2-13  & QuIP\#             & 2    & \textbf{41.3}                                              & \textbf{55.1}                                              & \textbf{68.3}                                          & \textbf{77.4}                                               & \textbf{67.7}                                         \\
2-7   & Native             & 16   & 40.6                                                       & 53.5                                                       & 71.0                                                   & 76.9                                                        & 67.0                                                  \\
\rowcolor[HTML]{D9D9D9} 
2-7   & QuIP\# no FT \& no $E_8$ & 4    & 39.5                                                       & 51.9                                                       & 71.3                                                   & 76.6                                                        & 67.3                                                  \\
\rowcolor[HTML]{D9D9D9} 
2-7   & QuIP\# No FT       & 4    & \textbf{40.4}                                              & \textbf{53.7}                                              & 68.5                                                   & \textbf{77.2}                                               & 67.5                                                  \\
\rowcolor[HTML]{D9D9D9} 
2-7   & QuIP\#             & 4    & 40.1                                                       & 53.4                                                       & \textbf{69.9}                                          & 76.5                                                        & \textbf{67.6}                                         \\
2-7   & QuIP\# no FT \& no $E_8$ & 3    & 38.1                                                       & 52.6                                                       & 65.2                                                   & 76.1                                                        & 65.1                                                  \\
2-7   & QuIP\# No FT       & 3    & 37.7                                                       & 53.1                                                       & \textbf{70.6}                                          & \textbf{76.7}                                               & \textbf{67.6}                                         \\
2-7   & QuIP\#             & 3    & \textbf{39.4}                                              & \textbf{53.8}                                              & 69.7                                                   & 76.1                                                        & 66.5                                                  \\
\rowcolor[HTML]{D9D9D9} 
2-7   & QuIP\# no FT \& no $E_8$ & 2    & 29.2                                                       & 42.5                                                       & 63.3                                                   & 68.0                                                        & 59.0                                                  \\
\rowcolor[HTML]{D9D9D9} 
2-7   & QuIP\# No FT       & 2    & 32.5                                                       & 42.8                                                       & 62.3                                                   & 71.2                                                        & 62.4                                                  \\
\rowcolor[HTML]{D9D9D9} 
2-7   & QuIP\#             & 2    & \textbf{36.1}                                              & \textbf{50.5}                                              & \textbf{68.3}                                          & \textbf{74.9}                                               & \textbf{64.9}                                         \\ \bottomrule
\end{tabular}
\end{table}

\subsection{More Scaling Plots}

\begin{figure}[H]
\centering
\includegraphics[height=0.4\linewidth]{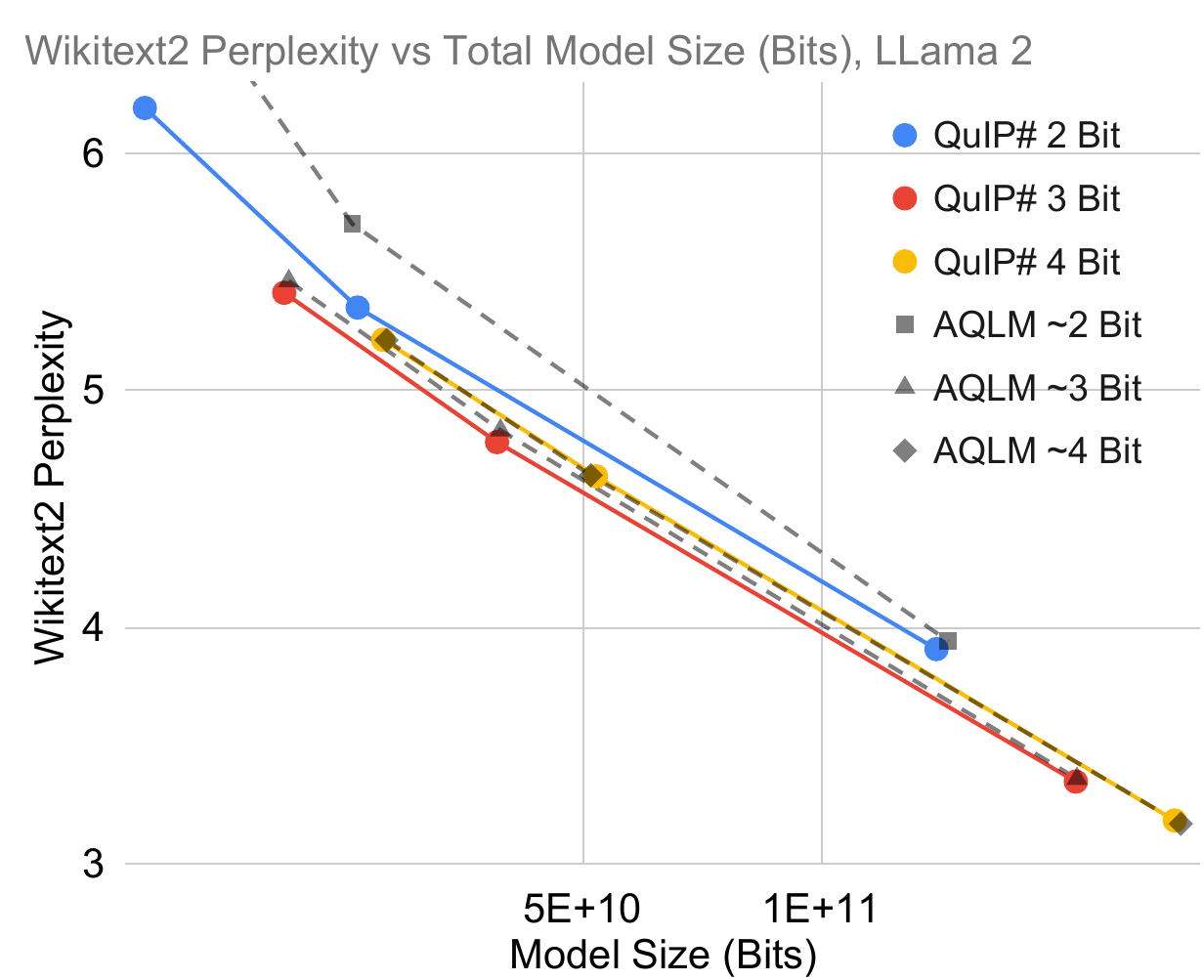}
\includegraphics[height=0.4\linewidth]{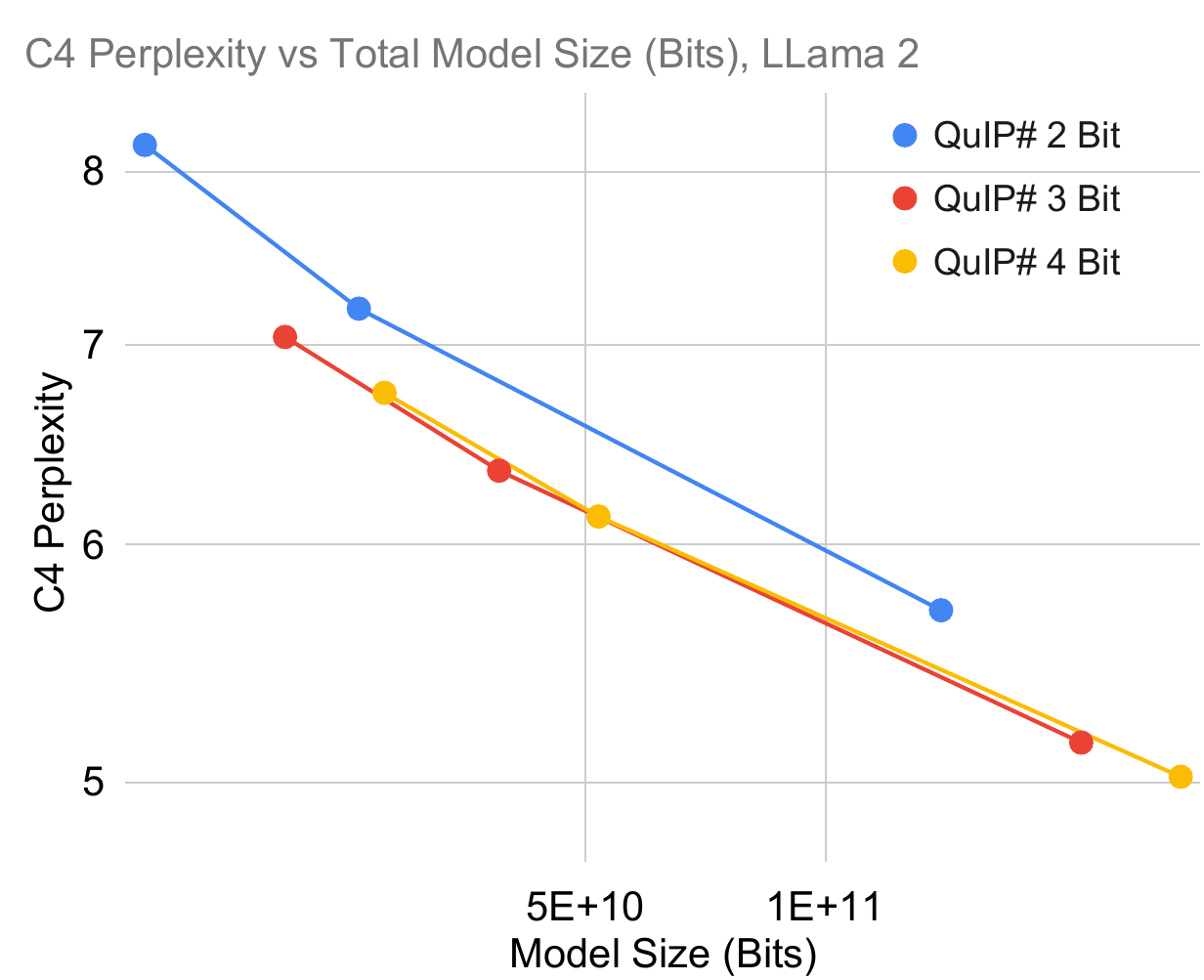}
\includegraphics[height=0.4\linewidth]{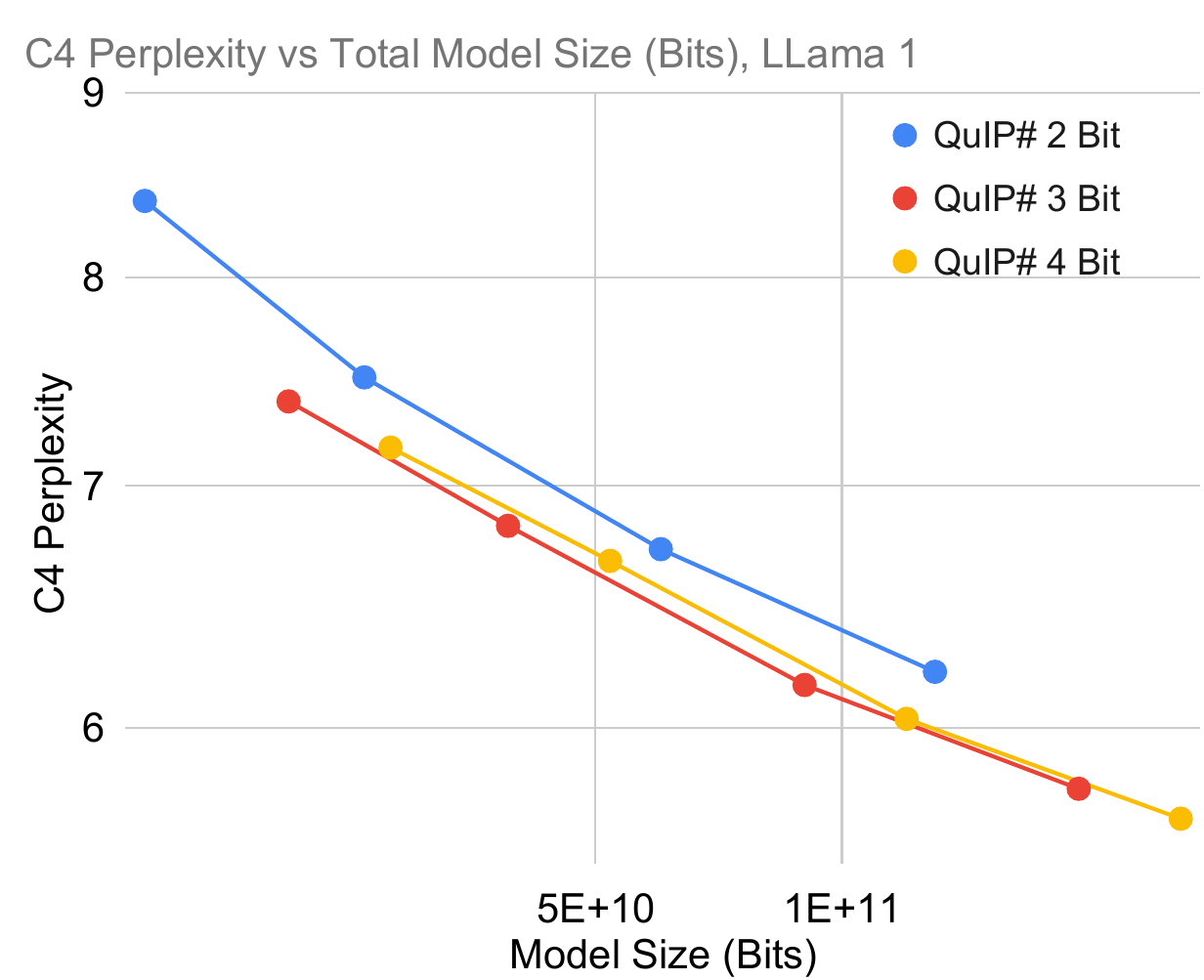}
\caption{\qss scaling. (Top Left) Llama 2 Wikitext 2 perplexity vs AQLM. Context length 4096. \qss 2 and 3 bit scale better than AQLM 2 and 3 bit. (Top Right) Llama 2 C4 Perplexity. Context length 4096. (Bottom) Llama 1 C4 Perplexity. Context length 2048.}
\end{figure}

\section{Implementation Details}

This section contains implementation details for our Llama experiments. 
These details also mostly apply to the Mixtral and Falcon numbers except we use the Falcon dataset \cite{falcon} as it is publicly avaiable.

\subsection{Bit Accounting}
\label{sec:accounting}

The additional overhead of \qss consists of 1KiB for E8P and $16n$ bits for each $n$D sign vector if using fine tuning or $n$ without. 
The 1KiB from E8P is shared over all linear layers, so it adds $\ll 0.01$ bits per weight. 
For a $m\times n$ matrix, the sign vectors take up $\frac{16(n+m)}{nm}$ bits per weight with fine tuning or $\frac{n+m}{nm}$ without.
For LLM-sized matrices (e.g. the smallest matrix in Llama 2 7B is $4096 \times 4096$), this is still $<0.01$ additional bits per weight.

\subsection{Hessian Generation}

Hessian matrices $H$ were generated with 6144 sequences of a model's native context length (2048 for Llama 1, 4096 for Llama 2) from the RedPajama 1T \cite{rpv1} dataset.

\subsection{Hadamard Matrices}

We use Hadamard matrices available at Neil Sloane's website \cite{neilsloane}.

\subsection{Perplexity and Zeroshot Evaluation}

We use the OPTQ \cite{optq} ``Wiktext2'' and ``C4'' (not ``C4 New'') sampling functions to calculate perplexity for our experiments.
We use LM Eval \cite{lmeval} to calculate zeroshot numbers.

\subsection{Scales}

In order to achieve good coverage of the codebook, we scale $W$ by $\rho |W|_F$ before quantizing $W$. For E8P, we used $\rho$ ~= 0.9, for RVQ 3 bit we used $\rho\approx= 0.98$ for the first stage and $\approx2.04$ for the second stage, for RVQ 4 bit we used $\rho\approx= 1.03$ for the first stage and $\approx3.45$ for the second stage. These numbers were determined by finding the $\rho$(s) that minimized the quantization error of quantizing a Gaussian to the codebook. The $\approx$ is because different models have slightly different optimal $\rho$ since incoherence processing does not produce an exact Gaussian. The actual numbers for each model were found with a coarse scale sweep but are all close to these values.

\subsection{Fine Tuning}

For the within-transformer block section of fine-tuning, we use the Adam optimizer \cite{adam}, a learning rate of $5 \times 10^{-5}$, batch size of 8, and sequence length equal to the model's native context length.
We train on small development dataset of 256 sequences from RedPajama 1T and validate on 128 sequences.
We train for 5 epochs (160 steps) and keep the best model parameters based on the validation set.
For the end to end tuning, we use the Adam optimizer, a learning rate of $5 \times 10^{-5}$, batch size of 1, sequence length equal to the model's context length (except for 70B, where we had to use 3072 to avoid an OOM on our not very well optimized training script), and the same dataset and epoch setup as before.
We observe that outside of using a low enough learning rate, the other hyperparameters did not affect fine-tuning much.
For the 2 bit models, we used a learning rate of $5 \times 10^{-4}$ for $S_U$ and $S_V$ ($5 \times 10^{-5}$ for everything else as above) for both the within-block and end to end fine tuning stages.

\subsection{Hardware}
All experiments were run on NVIDIA A100 GPUs except for the timing numbers, which were measured on a NVIDIA RTX 4090 to see what was possible with the current \sota NVIDIA consumer GPU.
We find that we can quantize Llama 2 70B without fine tuning in under 10 GPU-hours and with fine tuning in around 100 GPU-hours.
Both numbers do not include Hessian generation, which can be done once for a model and reused across many different quantization experiments.

\subsection{Code and Prequantized Models}

Our code is available at \url{https://github.com/Cornell-RelaxML/quip-sharp} and prequantized \qss models are available at \url{https://huggingface.co/relaxml}.

\section{Example Generation}

Below are some example generations from Llama 2 70B chat quantized with \qss to 2 bits, truncated to 256 tokens.

\begin{tcolorbox}
\textbf{Prompt:}  How much wood could a woodchuck chuck if a woodchuck could chuck wood?
\vspace{1em}

\textbf{\qs:} 

The answer to this tongue-twister is a bit of a mystery. Woodchucks, also known as groundhogs, are rodents that burrow in the ground and are known for their ability to move large amounts of soil. However, there is no scientific evidence to suggest that they can actually chuck wood.
The phrase ``how much wood could a woodchuck chuck'' was first used in a song called ``The Woodchuck's Song'' in the late 19th century. The song was popularized by the singer and songwriter, Johnny Mercer, and it became a hit in the 1940s. The song's lyrics include the line ``How much wood could a woodchuck chuck if a woodchuck could chuck wood?'' but they don't provide an answer.
In reality, woodchucks don't chuck wood, they dig burrows. They are able to move a lot of soil and dirt as they dig their burrows, but they don't have the ability to move wood. So, the answer to the tongue-
\end{tcolorbox}

\begin{tcolorbox}
\textbf{Prompt:} Write a song about large language models to the tune of Mariah Carey's ``All I Want For Christmas Is You''
\vspace{1em}

\textbf{\qs:}   

    Verse 1:\\
    I don't want a lot for Christmas\\
There's just one thing I'm wishing for\\
A language model that's large and bright\\
To help me with my writing, day and night\\
\\
    Chorus:\\
    All I want for Christmas is a language model\\
A big, smart, AI that's my ideal\\
I'll be happy with just a few billion parameters\\
All I want for Christmas is a language model, yeah\\
\\
    Verse 2:\\
    I'll be good, I won't be naughty\\
I'll use it for good, not foraughty\\
I'll generate text that's nice and neat\\
And make my writing life a treat\\
\\
    Chorus:\\
    All I want for Christmas is a language model\\
A big, smart, AI that's my ideal\\
I'll be happy with just a few billion parameters\\
All I want for Christmas is a language model, yeah\\
\\
    Bridge:\\
    I'\\

\end{tcolorbox}

\begin{tcolorbox}
\textbf{Prompt:} Write some CUDA code that does a matrix multiply
\vspace{1em}

\textbf{\qs:} 

Here is a simple example of a matrix multiply using CUDA:

\begin{verbatim}
#include <cuda.h>
#include <cuda_runtime.h>

__global__ void matrixMultiply(float *A, float *B, float *C, int rowsA, int co
lsA, int rowsB, int colsB) {
  int i = blockIdx.x * blockDim.x + threadIdx.x;
  int j = blockIdx.y * blockDim.y + threadIdx.y;
  if (i < rowsA && j < colsB) {
    float sum = 0;
    for (int k = 0; k < rowsB; k++) {
      sum += A[i + k * colsA] * B[k + j * rowsB];
    }
    C[i + j * rowsB] = sum;
  }
}

int main() {
  int rowsA = 3;
  int colsA = 3;
  int rowsB = 3;
\end{verbatim}
\end{tcolorbox}


\end{document}